\newtheorem{theorem}{Theorem}
\newenvironment{proof}{{\noindent\it Proof.}}{\hfill $\square$\par}
\def\tsc#1{\csdef{#1}{\textsc{\lowercase{#1}}\xspace}}
\begin{document}
	\let\WriteBookmarks\relax
	\def\floatpagepagefraction{1}
	\def\textpagefraction{.001}
	\shorttitle{Constrained Bilinear Factorization Multi-view Subspace Clustering}
	\shortauthors{Qinghai Zheng et~al.}
	
	\title [mode = title]{Constrained Bilinear Factorization Multi-view Subspace Clustering}                      
	\author[mymainaddress]{Qinghai Zheng}[style=chinese]
	
	\author[mymainaddress]{Jihua Zhu}[style=chinese, orcid=0000-0002-3081-8781]
	\cormark[1]
	
	\author[mymainaddress]{Zhiqiang Tian}[style=chinese]
	
	\author[mymainaddress]{Zhongyu Li}[style=chinese]
	
	\author[mymainaddress]{Shanmin Pang}[style=chinese]
	
	\author[mysecondaddress]{Xiuyi Jia}[style=chinese]
	
	\address[mymainaddress]{School of Software Engineering, Xi'an Jiaotong University, Xian 710049, People's Republic of China}
	
	\address[mysecondaddress]{School of Computer Science and Engineering, Nanjing University of Science and Technology, Nanjing 210094, People's Republic of China}
	
	\cortext[cor1]{Corresponding author, email: zhujh@xjtu.edu.cn.}
	
	\begin{abstract}
		Multi-view clustering is an important and fundamental problem. Many multi-view subspace clustering methods have been proposed, and most of them assume that all views share a same coefficient matrix. However, the underlying information of multi-view data are not fully exploited under this assumption, since the coefficient matrices of different views should have the same clustering properties rather than be uniform among multiple views. To this end, this paper proposes a novel Constrained Bilinear Factorization Multi-view Subspace Clustering (CBF-MSC) method. Specifically, the bilinear factorization with an orthonormality constraint and a low-rank constraint is imposed for all coefficient matrices to make them have the same trace-norm instead of being equivalent, so as to explore the consensus information of multi-view data more fully. Finally, an Augmented Lagrangian Multiplier (ALM) based algorithm is designed to optimize the objective function. Comprehensive experiments tested on nine benchmark datasets validate the effectiveness and competitiveness of the proposed approach compared with several state-of-the-arts.
	\end{abstract}
	
	\begin{keywords}
		Multi-view clustering\sep Subspace clustering\sep Bilinear factorization \sep Low-rank representation
	\end{keywords}
	
	\maketitle
	
	\section{Introduction}
	\overfullrule=2cm
	Subspace clustering is an important technique in data mining and machine learning involved with many applications, such as face clustering and motion segmentation \cite{vidal2011SubspaceClustering}. It relies on the assumption that high-dimensional data points lie in a union of multiple low-dimensional subspaces and aims to group data points into corresponding clusters simultaneously \cite{elhamifar2013SparseSubspaceClustering}. Owing to its promising performance and good interpretability, a number of clustering algorithms based on subspace clustering have been proposed \cite{vidal2011SubspaceClustering,elhamifar2013SparseSubspaceClustering,liu2013LRR,wang2013LRSSC,hu2014SMoothRepresentation,chen2019localityKBS}. For example, Sparse Subspace Clustering (SSC) \cite{elhamifar2013SparseSubspaceClustering} obtains a sparsest subspace coefficient matrix for clustering. Besides, Low Rank Representation (LRR) \cite{liu2013LRR} finds a self-representation of the dataset under the low-rank constraint. Low Rank Sparse Subspace Clustering (LRSSC) \cite{wang2013LRSSC} employs both the low-rank constraint and the sparsity constraint on the self-representation simultaneously. SMooth Representation (SMR) \cite{hu2014SMoothRepresentation} investigates the grouping effect for subspace clustering. All these subspace clustering approaches have achieved promising clustering results in practice. However, they are proposed for the single-view data rather than the multi-view data \cite{xu2013ASurveyonMultiviewLearning}, which are widespread in many real-world applications.
	
	Multi-view data, collected from multiple sources or different measurements, are common in real-world applications \cite{zhao2017MultiviewLearningOverview}. For instance, images can be described by variant features (SIFT \cite{lowe2004SIFT}, LBP \cite{ojala2002LBP}, etc.); visual frames and audio signals are two distinct views of a video and both are important for multimedia content understanding. Compared with single-view data, multi-view data contain more useful information for learning and data mining \cite{xu2013ASurveyonMultiviewLearning,zhao2017MultiviewLearningOverview,chao2017ASurveyonMultiviewClustering,xie2018on,wang2019studyKBS,KNOS_Recom_2,KNOS_Recom_3,KNOS_Recom_4}. It is of vital importance to achieve the agreement or consensus information among multiple views during clustering. Obviously, running a single-view clustering algorithm on the multi-view data directly is not a good choice for multi-view clustering \cite{xu2013ASurveyonMultiviewLearning,chao2017ASurveyonMultiviewClustering,kumar2011Co-TrainingMultiviewSpectralClustering,kumar2011Co-regMultiviewSpectralClustering}. A number of multi-view subspace clustering approaches have been proposed in recent years \cite{gao2015MVSC,IJCAI16AMGLnie2016parameter,AAAI2018CSMSC,PR2018MultiviewLRSSC,PR2019ISGMC}. Although good clustering results have been achieved in practice, most existing multi-view subspace clustering approaches assume that all views have a same coefficient matrix to explore the consensus information. The above assumption is not proper for multi-view clustering, since different views have specific self-expressiveness properties. A more suitable way is to assume that the coefficient matrices of multiple views have the same underlying data distribution and clustering properties, instead of being uniform among all views.
	
	To address the above problem and explore the underlying information of multi-view data, we propose a novel Constrained Bilinear Factorization Multi-view Subspace Clustering, dubbed CBF-MSC, in this paper. By introducing the bilinear factorization \cite{cabral2013ICCV2013UnifyingBilinearFactorization} with an orthonormality constraint, the coefficient matrices of all views gain the same clustering properties and the consensus information of multi-view data can be well explored. Meanwhile, the specific information of different views is taken into consideration during clustering as well. Finally, an alternating direction minimization algorithm based on the Augmented Lagrangian Multiplier (ALM) \cite{lin2011ALM} is designed to optimize the objective function, and experimental results conducted on nine real-world datasets demonstrate its superiority over several state-of-the-art approaches for multi-view clustering.
	
	The main contributions of this work are delivered as follows:
	\begin{itemize}
		\item [1)] A novel Constrained Bilinear Factorization Multi-view Subspace Clustering (CBF-MSC) is proposed in this paper, which assumes that the coefficient matrices of multiple views have the same clustering properties and consensus information of multi-view data are explored fully.
		\item [2)] By introducing the constrained bilinear factorization, a coefficient matrix can be factorized into a view-specific basis matrix and a common shared encoding matrix. Furthermore, we prove that the coefficient matrices of all views can obtain the same trace-norm during clustering.
		\item [3)] An effective optimization algorithm is developed and extensive experiments are conducted on nine benchmark datasets so as to demonstrate the effectiveness and competitiveness of the proposed method for multi-view clustering. 
	\end{itemize}
	
	The remainder of this paper is organized as follows. Related works are briefly reviewed in Section \ref{Related Work}. Section \ref{Proposed Methodology} presents the proposed method in detail, optimization of which is developed in Section \ref{Optimization}. Experimental results, including convergence properties analysis and parameters sensitivity analysis, are provided in Section \ref{Experiments}. Finally, Section \ref{Conclusion} concludes the paper with a brief summary.
	
	\section{Related Work}
	\label{Related Work}
	Generally, multi-view clustering approaches \cite{KNOS_Recom_1} can be categorized into two main groups roughly: generative methods and discriminative methods \cite{chao2017ASurveyonMultiviewClustering}. Methods in the first category try to construct generative models for multiple views by learning the fundamental distribution of data. For instance, the work proposed in \cite{bickel2004MultiviewClustering-ICDM2004} explores the multi-view clustering problem under an assumption that views are dependent and multi-nomial distribution is applied for models construction; the multi-view clustering method in \cite{tzortzis2009CMM-MultiviewClustering} learns convex mixture models for multiple views. 
	
	Discriminative methods aim to simultaneously minimize the intra-cluster disimilarity and the inter-cluster similarity. Most existing multi-view clustering approaches belong to this category \cite{chao2017ASurveyonMultiviewClustering,zhang2017LMSC}, including spectral clustering based methods~\cite{kumar2011Co-TrainingMultiviewSpectralClustering,kumar2011Co-regMultiviewSpectralClustering,xia2014RMSC,zhu2018OneStepMultiviewSpectralClustering,KNOS_Recom_5,GMCTKDE2019} and subspace clustering based methods~\cite{gao2015MVSC,zhang2015LT-MSC,cao2015DiMSC,zhang2017LMSC,AAAI2018CSMSC,FCMSC,zhou2019dual}. For example, co-training multi-view spectral clustering~\cite{kumar2011Co-TrainingMultiviewSpectralClustering} leverages the eigenvectors of a graph Laplacian from one view to constrain other views; co-regularized multi-view spectral clustering~\cite{kumar2011Co-regMultiviewSpectralClustering} combines similarity matrices of multiple views to achieve clustering results by co-regularizing the clustering hypotheses among views; Robust Multi-view Spectral Clustering (RMSC)~\cite{xia2014RMSC} recovers a transition probability matrix via low-rank and sparse decomposition for clustering. {\color{black}{Multi-View Spectral Clustering via Integrating Global and Local Graphs \cite{KNOS_Recom_5} is a method which tries to minimize the partial sum of singular values of the transition probability matrix with a priori rank information, and a global graph from the concatenated features is also constructed to exploit the complementary information embedded in different views.}} Besides, many subspace clustering based methods are proposed in recent year as well. Low-rank Tensor constrained Multi-view Subspace Clustering (LT-MSC)~\cite{zhang2015LT-MSC} explores the high order correlations underlying multi-view data and the tensor with a low-rank constraint is employed. Diversity-induced Multi-view Subspace Clustering (DiMSC)~\cite{cao2015DiMSC} obtains clustering performance with the help of the diversity constraint. Latent Multi-view Subspace Clustering (LMSC)~\cite{zhang2017LMSC} achieves the promising clustering performance by exploring the underlying complementary information and seeking a latent representation of multi-view data at the same time. The propose CBF-MSC belongs to the second category, i.e., a discriminative method based on subspace clustering.
	
	The matrix factorization is often used for matrix approximation~\cite{cabral2013ICCV2013UnifyingBilinearFactorization,zheng2012PracticalLowRankMatrixApproximation-L1norm}. Our method takes advantage of the bilinear factorization to construct the coefficient matrices which contain the underlying clustering information of all views. It is noteworthy that our method is different from approaches proposed in~\cite{liu2013MultiviewClustering-JointNonnegativeMatrixFactorization,zhao2017MultiviewClustering-DeepMatrixFactorization,akata2011NMFinMultimodalityData}. \cite{liu2013MultiviewClustering-JointNonnegativeMatrixFactorization} formulates a joint nonnegative matrix factorization to keep the clustering results among multiple views comparable; \cite{zhao2017MultiviewClustering-DeepMatrixFactorization} constructs a deep matrix factorization framework to explore the consensus information by seeking for a common representation of multi-view data. These methods perform the nonnegative matrix factorization~\cite{lee1999NMF-Nature} on the data matrix, columns of which are data points, directly. However, since statistic properties of different views are diverse, it is risky to employ the matrix factorization on the data matrices straightforward to seek a common representation matrix for multi-view data. As for the proposed CBF-MSC, it imposes the constrained bilinear factorization on subspace coefficient matrices, which are self-representations with the same or similar clustering properties \cite{elhamifar2013SparseSubspaceClustering,liu2013LRR,chao2017ASurveyonMultiviewClustering,zhang2015LT-MSC}.
	
	\section{Proposed Methodology}
	\label{Proposed Methodology}
	In this section, we introduce the proposed CBF-MSC in detail. For convenience, Table \ref{table_symbols} lists main symbols applied in this paper and Fig. \ref{figure_Illustration} illustrates the proposed method. 
	
	\begin{table}
		\caption{Main symbols utilized in this paper.}
		\begin{center}
			\begin{tabular}{|l|l|}
				\hline
				Symbol & Meaning \\
				\hline\hline
				$n$ & The number of samples. \\
				$v$ & The number of views. \\
				$c$ & The number of clusters.\\
				$d_i$ & The dimension of the $i$-th view.\\
				${x_j^i} \in {R^{d_i}}$ & The $j$-th data point from the $i$-th view.\\
				${X^{(i)}} \in {R^{{d_i} \times n}}$ & The data matrix of the $i$-th view.\\
				${\left\| A \right\|_{2,1}}$ & The $l_{2,1}$ norm of matrix A.\\
				${\left\| A \right\|_ * }$ & The trace norm of matrix A.\\
				${\left\| A \right\|_ F^2 }$ & The Frobenius norm of matrix A.\\
				${\mathop{\rm rank}\nolimits}(A)$ & The rank of matrix A.\\
				${\mathop{\rm abs}\nolimits}(\cdot)$ & The element-wise absolute operator.\\
				\hline
			\end{tabular}
		\end{center}
		\label{table_symbols}
	\end{table}
	
	\begin{figure*}
		\begin{center}
			\includegraphics[width=0.85\linewidth]{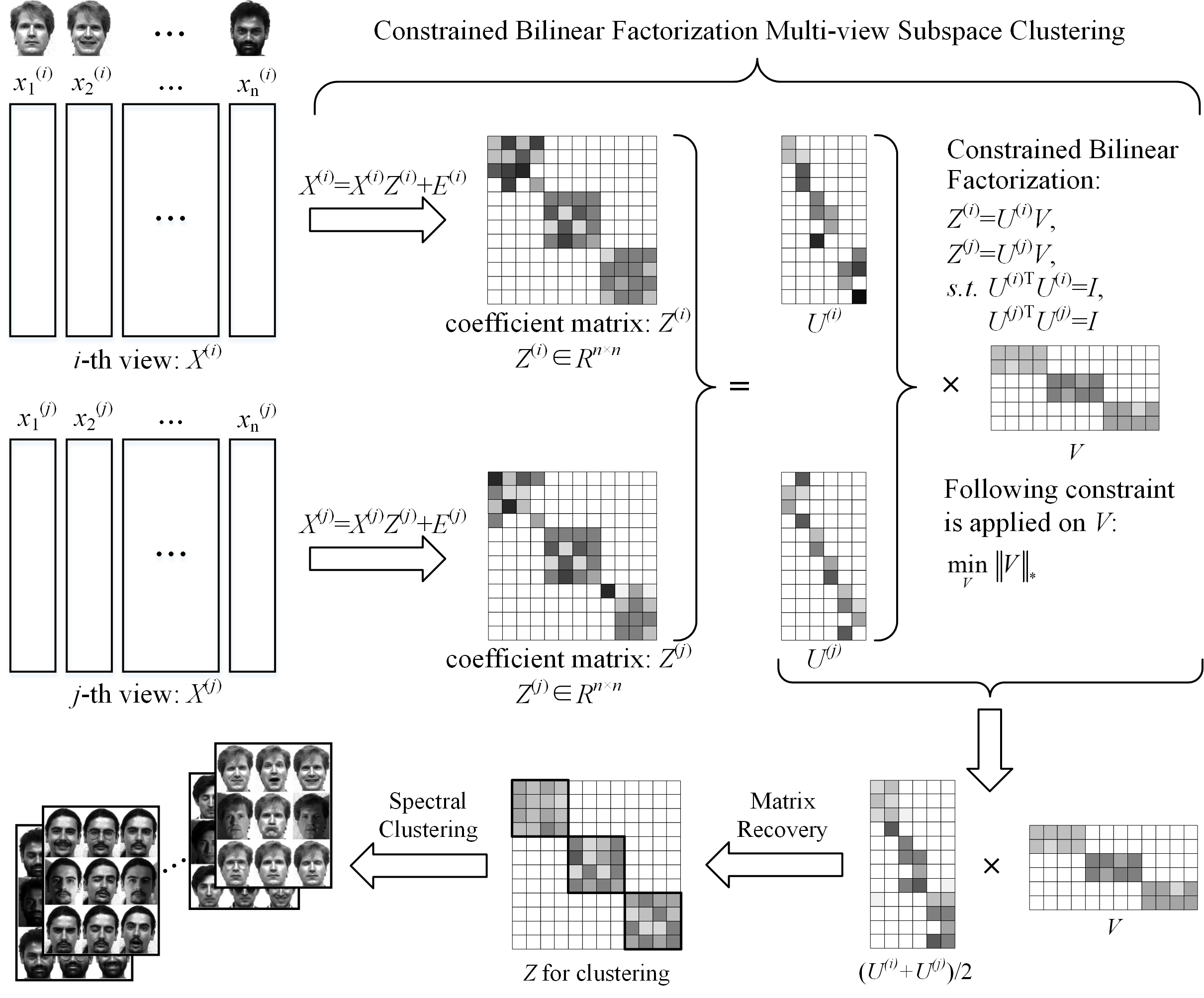}
		\end{center}
		\caption{Overview of the proposed CBF-MSC. Taking the $i$-th view and the $j$-th view here for example, clustering results of the $i$-th view and the $j$-th view are same and it is clear that $Z^{(i)}$ is not the same with $Z^{(j)}$. Actually, our method assumes that $\rm{rank(}Z^{(i)}\rm{)}$ equals to $\rm{rank(}Z^{(j)}\rm{)}$, and to this end, the constrained bilinear factorization is introduced in the proposed method. Finally, the multi-view clustering results of the proposed CBF-MSC are inferred by leveraging the spectral clustering with the adjacency matrix 
			$({\mathop{\rm abs}\nolimits} (Z) + {\mathop{\rm abs}\nolimits} ({Z^T}))/2$.}
		\label{figure_Illustration}
	\end{figure*}
	
	Subspace clustering based on self-representation is a fundamental and important technique~\cite{vidal2011SubspaceClustering,zhang2017LMSC}. Taking Low Rank Representation (LRR)~\cite{liu2013LRR} for example, given the $i$-th view  $X^{(i)}$ with $n$ samples and $d_i$ dimension, data points of which are drawn from $c$ subspaces, the objective function of subspace clustering can be written as follows:
	\begin{equation}
	\mathop {\min }\limits_{Z^{(i)}} {\kern 1pt} {\kern 1pt} {\kern 1pt} {\left\| E^{(i)} \right\|_{2,1}} + \lambda \left\| Z^{(i)} \right\|_*{\kern 1pt} {\kern 1pt} {\kern 1pt} {\kern 1pt} {\rm{s}}{\rm{.t}}{\rm{.}}{\kern 1pt} {\kern 1pt} {\kern 1pt} X^{(i)} = X^{(i)}Z^{(i)} + E^{(i)},
	\label{LRR_singleview}
	\end{equation}
	where $\lambda$ is the tradeoff parameter, $Z^{(i)} \in {R^{n \times n}}$ denotes the subspace coefficient matrix of $X^{(i)}$, and $E^{(i)}\in {R^{d \times n}}$ indicates the sample-specific error. 
	
	\subsection{Formulation}
	Given a multi-view dataset with $v$ views, i.e. $\{ {X^{(i)}}\} _{i = 1}^v$. It is clear that $Z^{(i)}$ in Equation (\ref{LRR_singleview}) is vital for clustering and the most existing multi-view subspace clustering methods pursuit a shared coefficient matrix for all views. However, in real-world applications, data points from different views have the same clustering results, rather than the same coefficient matrix. In other words, it is not a proper way to obtain the multi-view clustering results with a same coefficient matrix. In this paper, CBF-MSC assumes that the coefficient matrices have a same rank among multiple views, since the rank of a coefficient matrix is crucial for clustering \cite{liu2013LRR}.
	\begin{equation}
	\begin{array}{l}
	\mathop {\min }\limits_{{E^{(i)}},{Z^{(i)}}} \sum\limits_{i = 1, \cdots ,v} {{{\left\| {{E^{(i)}}} \right\|}_{2,1}} + \lambda {\mathop{\rm rank}\nolimits} ({Z^{(i)}})} \\
	{\rm{s}}{\rm{.t}}{\rm{.}}{\kern 1pt} {\kern 1pt} {\kern 1pt} {X^{(i)}} = {X^{(i)}}{Z^{(i)}} + {E^{(i)}}\\
	{\kern 1pt} {\kern 1pt} {\kern 1pt} {\kern 1pt} {\kern 1pt} {\kern 1pt} {\kern 1pt} {\kern 1pt} {\kern 1pt} {\kern 1pt} {\kern 1pt} {\kern 1pt} {\kern 1pt} {\kern 1pt} {\kern 1pt} {\kern 1pt} {\mathop{\rm rank}\nolimits} ({Z^{(1)}}) = {\kern 1pt} {\mathop{\rm rank}\nolimits} ({Z^{(2)}}) =  \cdots  = {\kern 1pt} {\mathop{\rm rank}\nolimits} ({Z^{(v)}}).
	\end{array}
	\end{equation}
	
	Due to the discrete nature of the rank function, it is hard to optimize the above problem \cite{liu2013LRR}. Since the trace norm regularization promotes low rank solutions and the trace norm of $Z^{(i)}$ is equal to the $l_1$-norm on the singular values of $Z^{(i)}$ \cite{fazel2001a}, the following problem can be constructed straightforward:
	\begin{equation}
	\begin{array}{l}
	\mathop {\min }\limits_{{E^{(i)}},{Z^{(i)}}} \sum\limits_{i = 1, \cdots ,v} {{{\left\| {{E^{(i)}}} \right\|}_{2,1}} + \lambda {{\left\| {{Z^{(i)}}} \right\|}_ * }} \\
	{\rm{s}}{\rm{.t}}{\rm{.}}{\kern 1pt} {\kern 1pt} {\kern 1pt} {X^{(i)}} = {X^{(i)}}{Z^{(i)}} + {E^{(i)}}\\
	{\kern 1pt} {\kern 1pt} {\kern 1pt} {\kern 1pt} {\kern 1pt} {\kern 1pt} {\kern 1pt} {\kern 1pt} {\kern 1pt} {\kern 1pt} {\kern 1pt} {\kern 1pt} {\kern 1pt} {\kern 1pt} {\kern 1pt} {\kern 1pt} {\left\| {{Z^{(1)}}} \right\|_ * } = {\kern 1pt} {\left\| {{Z^{(2)}}} \right\|_ * } =  \cdots  = {\kern 1pt} {\left\| {{Z^{(v)}}} \right\|_ * }.
	\end{array}
	\label{CBFMSC_1}
	\end{equation}
	
	The Equation (\ref{CBFMSC_1}) is formulated under a suitable assumption for multi-view clustering, however, it is difficult to be well optimized due to the same rank constraint. Therefore, the further improvement should be employed. As shown in Fig. \ref{figure_Illustration}, the constrained bilinear factorization is introduced in our CBF-MSC and the coefficient matrix of the $i$-th view can be written as follows:
	\begin{equation}
	{Z^{(i)}} = {U^{(i)}}V{\kern 1pt} {\kern 1pt} {\kern 1pt} {\kern 1pt} {\kern 1pt} {\rm{s}}{\rm{.t}}{\rm{.}}{\kern 1pt} {\kern 1pt} {\kern 1pt} {U^{{{(i)}^T}}}{U^{(i)}} = I,
	\label{BilinearFactorization_Introducted}
	\end{equation}
	where $I$ denotes an identity matrix with proper size, ${U^{(i)}} \in {R^{n \times k}}$ indicates the basis matrix of the matrix factorization with respect to $Z^{(i)}$, and $V \in {R^{k \times n}}$ is an encoding matrix of $Z^{(i)}$ based on $U^{(i)}$, $ k \ge c$. Since each view has the specific information, the subspace coefficient matrices of multiple views are different from each other to some degree, and it is reasonable that the basis matrix of the bilinear factorization varies with different views. Moreover, it is worth noting that the orthonormality constraint of $U^{(i)}$ plays an significant role here. To be specific, we have the following theorem.
	\begin{theorem}
		\label{Theorem1}
		Given a square matrix $S$, which can be factorized as $S = LR$, if the constraint $L^TL=I$ holds, the trace norm of $S$ equals to the trace norm of $R$:
		\begin{equation}
		{\left\| S \right\|_*} = {\left\| R \right\|_*}.
		\end{equation}
	\end{theorem}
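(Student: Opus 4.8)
The plan is to exploit the fact that the trace norm depends only on the singular values, and that left-multiplication by a matrix with orthonormal columns leaves the singular values unchanged. Concretely, I would start from one of the equivalent definitions of the trace norm, namely $\left\| A \right\|_* = \mathrm{tr}\big((A^TA)^{1/2}\big) = \sum_i \sigma_i(A)$, where $\sigma_i(A)$ are the singular values of $A$. The whole argument then reduces to showing that $S = LR$ and $R$ have the same list of (nonzero) singular values.

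First I would compute $S^TS = (LR)^T(LR) = R^T L^T L R$, and then apply the hypothesis $L^TL = I$ to get $S^TS = R^TR$. Since the singular values of any matrix $A$ are the nonnegative square roots of the eigenvalues of $A^TA$, the identity $S^TS = R^TR$ immediately forces $S$ and $R$ to have identical singular values (the two Gram matrices are literally equal, so they have the same spectrum, multiplicities included). Summing the square roots of these shared eigenvalues gives $\left\| S \right\|_* = \left\| R \right\|_*$.

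An alternative, perhaps more transparent, route is via the (thin) SVD: write $R = P \Sigma Q^T$ with $P$ having orthonormal columns; then $S = LR = (LP)\Sigma Q^T$, and $(LP)^T(LP) = P^T L^T L P = P^TP = I$, so $LP$ still has orthonormal columns and $(LP)\Sigma Q^T$ is a valid SVD of $S$ with the same diagonal $\Sigma$. Either way the conclusion follows at once.

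I do not anticipate a genuine obstacle here; the only point requiring a little care is bookkeeping with the matrix shapes, since $R \in R^{k \times n}$ and $S \in R^{n \times n}$ with $k \le n$, so $S$ formally has $n-k$ extra singular values. These are all zero (because $\mathrm{rank}(S) \le \mathrm{rank}(R) \le k$), hence they contribute nothing to the sum, and the equality of trace norms is unaffected. I would state this explicitly but not dwell on it.
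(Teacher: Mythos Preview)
Your proposal is correct and follows essentially the same route as the paper: compute $S^TS = R^T L^T L R = R^T R$, conclude that $S$ and $R$ share the same singular values, and hence have equal trace norms. Your additional remark about the $n-k$ extra zero singular values and the alternative SVD argument are nice touches that the paper omits, but the core idea is identical.
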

	
	\begin{proof}
		Since ${S^T}S = {(LR)^T}(LR) = {R^T}{L^T}LR = {R^T}R$, the matrix of ${S^T}S$ has the same eigenvalues with the matrix of ${R^T}R$. Accordingly, the matrix $S$ and the matrix $R$ have the same singular values, namely, the trace norm of $S$ equals to the trace norm of $R$, i.e. ${\left\| S \right\|_*} = {\left\| R \right\|_*}$.
	\end{proof}
	
	For an arbitrary view of multi-view data $X^{(i)}$, according to Theorem \ref{Theorem1}, the trace norm of the subspace coefficient matrix equals to the trace norm of $V$ subjected to the orthonormality constraint of $U^{(i)}$. That is to say, by performing the constrained bilinear factorization on the low rank representation of multiple views, the following formula can be constructed directly:
	\begin{equation}
	\begin{array}{l}
	{\left\| {{Z^{(1)}}} \right\|_*} = {\left\| {{Z^{(2)}}} \right\|_*} =  \cdots  = {\left\| {{Z^{(v)}}} \right\|_*} = {\left\| V \right\|_*}\\
	{\rm{s}}{\rm{.t}}{\rm{.}}{\kern 1pt} {\kern 1pt} {\kern 1pt} {Z^{(i)}} = {U^{(i)}}V,{\kern 1pt} {\kern 1pt} {\kern 1pt} {\kern 1pt} {\kern 1pt} {\kern 1pt} {U^{{{(i)}^T}}}{U^{(i)}} = I,
	\end{array}
	\end{equation}
	where ${\kern 1pt} i = 1,2, \cdots ,v$ and it also indicates that $V$, which is invariant with different views, is a proper choice. Consequently, The problem of the Equation (\ref{CBFMSC_1}) is equivalent to the following problem:
	\begin{equation}
	\begin{array}{l}
	\mathop {\min }\limits_{{E^{(i)}},{Z^{(i)}}} \sum\limits_{i = 1, \cdots ,v} {{{\left\| {{E^{(i)}}} \right\|}_{2,1}}}  + \lambda {\left\| V \right\|_ * }\\
	{\rm{s}}{\rm{.t}}{\rm{.}}{\kern 1pt} {\kern 1pt} {\kern 1pt} {X^{(i)}} = {X^{(i)}}{Z^{(i)}} + {E^{(i)}}\\
	{\kern 1pt} {\kern 1pt} {\kern 1pt} {\kern 1pt} {\kern 1pt} {\kern 1pt} {\kern 1pt} {\kern 1pt} {\kern 1pt} {\kern 1pt} {\kern 1pt} {\kern 1pt} {\kern 1pt} {\kern 1pt} {\kern 1pt} {\kern 1pt} {Z^{(i)}} = {U^{(i)}}V\\
	{\kern 1pt} {\kern 1pt} {\kern 1pt} {\kern 1pt} {\kern 1pt} {\kern 1pt} {\kern 1pt} {\kern 1pt} {\kern 1pt} {\kern 1pt} {\kern 1pt} {\kern 1pt} {\kern 1pt} {\kern 1pt} {\kern 1pt} {\kern 1pt} {U^{{{(i)}^T}}}{U^{(i)}} = I
	\end{array}
	\label{CBFMSC_ObjectiveFunction}
	\end{equation}
	where $\lambda$ is the tradeoff parameter, $V$ denotes the common shared encoding matrix which contains the consensus information, and $\{ {U^{(i)}}\} _{i = 1}^v$ indicates the view-specific information, therefore, the complementary information of multi-view data is also concerned during clustering. 
	
	Once $V$ and $\{ {U^{(i)}}\} _{i = 1}^v$ are optimized, a subspace coefficient matrix $Z$, which contains the consensus information of multiple views, can be achieved as follows:
	\begin{equation}
	Z = \frac{1}{v}\sum\limits_{i = 1}^v {{U^{(i)}}V},
	\end{equation}
	
	\section{Optimization}
	\label{Optimization}
	In this section, an algorithm is designed for optimizing the objective function of the proposed CBF-MSC effectively. Besides analyses of the computational complexity and convergence are presented as well.
	
	\subsection{Optimization of CBF-MSC}
	To solve the  objective function of Equation (\ref{CBFMSC_ObjectiveFunction}), an algorithm based on the Augmented Lagrangian Multiplier (ALM) \cite{lin2011ALM} method is developed in this section. Besides, the alternating  direction  minimization  strategy is employed and an auxiliary variable $L$ is introduced here to make the optimization procedure separable. Therefore, the objective function can be rewritten as follows:
	\begin{equation}
	\begin{array}{l}
	\mathop {\min }\limits_{{E^{(i)}},{Z^{(i)}}} \sum\limits_{i = 1, \cdots ,v} {{{\left\| {{E^{(i)}}} \right\|}_{2,1}}}  + \lambda {\left\| L \right\|_ * }\\
	{\rm{s}}{\rm{.t}}{\rm{.}}{\kern 1pt} {\kern 1pt} {\kern 1pt} {X^{(i)}} = {X^{(i)}}{Z^{(i)}} + {E^{(i)}}\\
	{\kern 1pt} {\kern 1pt} {\kern 1pt} {\kern 1pt} {\kern 1pt} {\kern 1pt} {\kern 1pt} {\kern 1pt} {\kern 1pt} {\kern 1pt} {\kern 1pt} {\kern 1pt} {\kern 1pt} {\kern 1pt} {\kern 1pt} {Z^{(i)}} = {U^{(i)}}V\\
	{\kern 1pt} {\kern 1pt} {\kern 1pt} {\kern 1pt} {\kern 1pt} {\kern 1pt} {\kern 1pt} {\kern 1pt} {\kern 1pt} {\kern 1pt} {\kern 1pt} {\kern 1pt} {\kern 1pt} {\kern 1pt} {\kern 1pt} {\kern 1pt} V = L\\
	{\kern 1pt} {\kern 1pt} {\kern 1pt} {\kern 1pt} {\kern 1pt} {\kern 1pt} {\kern 1pt} {\kern 1pt} {\kern 1pt} {\kern 1pt} {\kern 1pt} {\kern 1pt} {\kern 1pt} {\kern 1pt} {\kern 1pt} {\kern 1pt} {U^{{{(i)}^T}}}{U^{(i)}} = I.
	\end{array}
	\end{equation}
	The corresponding augmented Lagrange function is:
	\begin{equation}
	\begin{array}{l}
	{\cal L}(V,L,{U^{(i)}},{Z^{(i)}},{E^{(i)}})\\
	{\kern 1pt} {\kern 1pt} {\kern 1pt} {\kern 1pt} {\kern 1pt} {\kern 1pt} {\kern 1pt} {\kern 1pt} {\kern 1pt} {\kern 1pt} {\kern 1pt} {\kern 1pt} {\rm{ = }}\sum\limits_{i = 1}^v {{{\left\| {{E^{(i)}}} \right\|}_{2,1}}}  + \lambda {\left\| L \right\|_ * }\\
	{\kern 1pt} {\kern 1pt} {\kern 1pt} {\kern 1pt} {\kern 1pt} {\kern 1pt} {\kern 1pt} {\kern 1pt} {\kern 1pt} {\kern 1pt} {\kern 1pt} {\kern 1pt}  + \sum\limits_{i = 1}^v {\Gamma \left( {Y_1^{(i)},{X^{(i)}} - {X^{(i)}}{Z^{(i)}} - {E^{(i)}}} \right)} \\
	{\kern 1pt} {\kern 1pt} {\kern 1pt} {\kern 1pt} {\kern 1pt} {\kern 1pt} {\kern 1pt} {\kern 1pt} {\kern 1pt} {\kern 1pt} {\kern 1pt} {\kern 1pt}  + \sum\limits_{i = 1}^v {\Gamma \left( {Y_2^{(i)},{Z^{(i)}} - {U^{(i)}}V} \right)}  + \Gamma \left( {{Y_3},V - L} \right),
	\end{array}
	\label{CBFMSC_ALM_Function}
	\end{equation}
	where $Y_1^{(i)}$, $Y_2^{(i)}$ and $Y_3$ are Lagrange multipliers, and for the concise representation, $\Gamma (A,B)$ is defined as follows:
	\begin{equation}
	\Gamma (A,B) = \left\langle {A,B} \right\rangle  + \frac{\mu }{2}\left\| B \right\|_F^2,
	\end{equation}
	in which $\mu$ indicates a positive adaptive penalty parameter, $\left\langle { \cdot , \cdot } \right\rangle$ denotes the inner product of two matrices.
	
	The optimization of Equation (\ref{CBFMSC_ObjectiveFunction}) can be solved effectively by minimizing the corresponding ALM Equation (\ref{CBFMSC_ALM_Function}), which can be decomposed into six subproblems.
	
	\textbf{1) Subproblem of updating $E^{(i)}$:} The subproblem of updating $E^{(i)}$ can be written as follows when other variables are fixed:
	\begin{equation}
	\mathop {\min }\limits_{{E^{(i)}}} \sum\limits_{i = 1}^v {\left( {{{\left\| {{E^{(i)}}} \right\|}_{2,1}}{\rm{ + }}\Gamma \left( {Y_1^{(i)},{X^{(i)}} - {X^{(i)}}{Z^{(i)}} - {E^{(i)}}} \right)} \right)},
	\end{equation}
	results of which are the same with the following problem:
	\begin{equation}
	\mathop {\min }\limits_{{E^{(i)}}} \sum\limits_{i = 1}^v {\left( {{{\left\| {{E^{(i)}}} \right\|}_{2,1}}{\rm{ + }}\frac{\mu }{2}\left\| {{E^{(i)}} - T_E^{(i)}} \right\|_F^2} \right)},
	\label{E_subproblem}
	\end{equation}
	in which $T_E^{(i)}$ is defined as follows:
	\begin{equation}
	T_E^{(i)} = \left( {{X^{(i)}} - {X^{(i)}}{Z^{(i)}} + \frac{{Y_1^{(i)}}}{\mu }} \right),
	\end{equation}
	and according to Lemma 4.1 in \cite{liu2013LRR}, Equation (\ref{E_subproblem}) can be optimized effectively with a closed form.
	
	\textbf{2) Subproblem of updating $U^{(i)}$:} With other variables being fixed, the subproblem with respect to $U^{(i)}$ is:
	\begin{equation}
	\mathop {\min }\limits_{{U^{(i)}}} \sum\limits_{i = 1}^v {\Gamma \left( {Y_2^{(i)},{Z^{(i)}} - {U^{(i)}}V} \right)} {\kern 1pt} {\kern 1pt} {\kern 1pt} {\kern 1pt} {\kern 1pt} {\kern 1pt} {\rm{s}}{\rm{.t}}{\rm{.}}{\kern 1pt} {\kern 1pt} {\kern 1pt} {U^{{{(i)}^T}}}{U^{(i)}} = I,
	\end{equation}
	which can be reformulated as follows:
	\begin{equation}
	\begin{array}{l}
	\mathop {\min }\limits_{{U^{(i)}}} \sum\limits_{i = 1}^v {\frac{\mu }{2}\left\| {{{\left( {{Z^{(i)}} + \frac{{Y_2^{(i)}}}{\mu }} \right)}^T} - {V^T}{U^{{{(i)}^T}}}} \right\|_F^2} {\kern 1pt} {\kern 1pt} {\kern 1pt} {\kern 1pt} {\kern 1pt} {\kern 1pt} \\
	{\rm{s}}{\rm{.t}}{\rm{.}}{\kern 1pt} {\kern 1pt} {\kern 1pt} {U^{{{(i)}^T}}}{U^{(i)}} = I
	\end{array}
	\label{U_subproblem}
	\end{equation}
	which is a typical orthogonal procrustes problem~\cite{Peter1966OPPproposed}, and can be solved effectively according to \cite{OPPsolution}.
	
	\textbf{3) Subproblem of updating $V$:} To update $V$ with other variables fixed, we solve the following problem:
	\begin{equation}
	\mathop {\min }\limits_V \sum\limits_{i = 1}^v {\Gamma \left( {Y_2^{(i)},{Z^{(i)}} - {U^{(i)}}V} \right)}  + \Gamma \left( {{Y_3},V - L} \right),
	\end{equation}
	and we obtain the following optimization by taking the derivative with respect to $V$ and letting to be 0,
	\begin{equation}
	V = {T_{VA}}^{ - 1}{T_{VB}},
	\end{equation}
	with
	\begin{equation}
	\begin{array}{l}
	{T_{VA}} = \mu \left( {I + \sum\limits_{i = 1}^v {{U^{{{(i)}^T}}}{U^{(i)}}} } \right),\\
	{T_{VB}} = \mu L - {Y_3} + \sum\limits_{i = 1}^v {\left( {{U^{{{(i)}^T}}}Y_2^{(i)} + \mu {U^{{{(i)}^T}}}{Z^{(i)}}} \right).} 
	\end{array}
	\end{equation}
	
	\textbf{4) Subproblem of updating $L$:} The auxiliary variable Q can be updated as follows:
	\begin{equation}
	\mathop {\min }\limits_L \lambda {\left\| L \right\|_ * } + \frac{\mu }{2}\left\| {L - \left( {V + \frac{{{Y_3}}}{\mu }} \right)} \right\|_F^2,
	\label{L_subproblem}
	\end{equation}
	and to obtain the solution of Equation (\ref{L_subproblem}), a soft-threshold operator \cite{lin2011ALM} ${S_\varepsilon }$ is defined as follows:
	\begin{equation}
	{S_\varepsilon }(x) = \left\{ \begin{array}{l}
	x - \varepsilon ,{\kern 1pt} {\kern 1pt} {\kern 1pt} {\rm{if}}{\kern 1pt} {\kern 1pt} {\kern 1pt} x - \varepsilon  > 0\\
	x + \varepsilon ,{\kern 1pt} {\kern 1pt} {\kern 1pt} {\rm{if}}{\kern 1pt} {\kern 1pt} {\kern 1pt} x - \varepsilon  < 0\\
	{\kern 1pt} {\kern 1pt} {\kern 1pt} {\kern 1pt} {\kern 1pt} {\kern 1pt} {\kern 1pt} {\kern 1pt} 0{\kern 1pt} {\kern 1pt} {\kern 1pt} {\kern 1pt} {\kern 1pt} {\kern 1pt} {\kern 1pt} {\kern 1pt} {\kern 1pt} {\kern 1pt} ,{\kern 1pt} {\kern 1pt} {\kern 1pt} {\rm{otherwise}}
	\end{array} \right.
	\end{equation}
	and by performing SVD on ${\left( {V + \frac{{{Y_3}}}{\mu }} \right)}$, i.e. ${\left( {V + \frac{{{Y_3}}}{\mu }} \right)} = {U_L}\Sigma V_L^T$, the optimization of $L$ is
	\begin{equation}
	L = {U_L}{S_{{\lambda _2}/\mu }}(\Sigma )V_L^T.
	\end{equation}
	
	\textbf{5) Subproblem of updating ${Z^{(i)}}$:} With other variables fixed, the subproblem of updating can be formulated as follows:
	\begin{equation}
	\begin{array}{l}
	\mathop {\min }\limits_{{Z^{(i)}}} \sum\limits_{i = 1}^v {\Gamma \left( {Y_1^{(i)},{X^{(i)}} - {X^{(i)}}{Z^{(i)}} - {E^{(i)}}} \right)} \\
	{\kern 1pt} {\kern 1pt} {\kern 1pt} {\kern 1pt} {\kern 1pt} {\kern 1pt} {\kern 1pt} {\kern 1pt} {\kern 1pt} {\kern 1pt} {\kern 1pt} {\kern 1pt} {\kern 1pt} {\kern 1pt} {\kern 1pt} {\kern 1pt} {\kern 1pt} {\kern 1pt} {\kern 1pt}  + \sum\limits_{i = 1}^v {\Gamma \left( {Y_2^{(i)},{Z^{(i)}} - {U^{(i)}}V} \right)} 
	\end{array},
	\end{equation}
	and we take the derivative of the above function with respect to ${Z^{(i)}}$, set it to be 0, then the optimization of ${Z^{(i)}}$ can be obtained as follows:
	\begin{equation}
	{Z^{(i)}} = T_{ZA}^{{{(i)}^{ - 1}}}T_{ZB}^{(i)},
	\end{equation}
	in which ${T_{ZA}^{(i)}}$ and ${T_{ZB}^{(i)}}$ can be written as follows:
	\begin{equation}
	\begin{array}{l}
	T_{ZA}^{(i)} = \mu \left( {I + {X^{{{(i)}^T}}}{X^{(i)}}} \right),\\
	T_{ZB}^{(i)} = \mu {U^{(i)}}V - Y_2^{(i)} + {X^{{{(i)}^T}}}\left( {Y_1^{(i)} + \mu {X^{(i)}} - \mu {E^{(i)}}} \right).
	\end{array}
	\end{equation}
	
	\textbf{6) Subproblem of updating multipliers and $\mu$:} The multipliers can be updated as follows according to~\cite{lin2011ALM}:
	\begin{equation}
	\left\{ \begin{array}{l}
	Y_1^{(i)} = Y_1^{(i)} + \mu ({X^{(i)}} - {X^{(i)}}{Z^{(i)}} - E^{(i)})\\
	Y_2^{(i)} = Y_2^{(i)} + \mu ({Z^{(i)}} - {U^{(i)}}V)\\
	{Y_3} = {Y_3} + \mu (V - L).
	\end{array} \right.
	\end{equation}
	Besides, we update the parameter $\mu$ in each iteration with a nonnegative scalar $\rho$ and a threshold value $\mu_{max}$, i.e. $\mu  = \min (\rho \mu ,{\mu _{\max }})$.
	
	Algorithm 1 presents the whole procedure of our proposed optimization.
	\begin{algorithm}
		\caption{Algorithm of CBF-MSC}
		{\bf Input:} \\
		\hspace*{0.12in} Multi-view $\{ {X^{(i)}}\} _{i = 1}^v$, $U^{(i)}=0$, $V=0$, $L=0$, $Y_1^{(i)}=0$, $Y_2^{(i)}=0$, $Y_3=0$,\\
		\hspace*{0.12in} $\rho=1.9$, $\mu  = {10^{ - 4}}$, ${\mu _{\max }} = {10^{6}}$, $\varepsilon  = {10^{ - 6}}$, $Z^{(i)}$ with random initialization;\\
		{\bf Output:} \\ 
		\hspace*{0.12in} $\{ {U^{(i)}}\} _{i = 1}^v$, $V$,
		
		{\bf Repeat:}\\
		\hspace*{0.12in} Updating $V$ according to the subproblem 3;\\
		\hspace*{0.12in} Updating $L$ according to the subproblem 4;\\
		\hspace*{0.12in} {\bf For} $i = 1,2 \cdots ,v$ {\bf do}:\\
		\hspace*{0.36in} Updating $E^{(i)}$ according to the subproblem 1;\\
		\hspace*{0.36in} Updating $U^{(i)}$ according to the subproblem 2;\\
		\hspace*{0.36in} Updating $Z^{(i)}$ according to the subproblem 5;\\
		\hspace*{0.36in} Updating $Y_1^{(i)}$ according to the subproblem 6;\\
		\hspace*{0.36in} Updating $Y_2^{(i)}$ according to the subproblem 6;\\
		\hspace*{0.12in} {\bf End}\\
		\hspace*{0.12in} Updating $Y_3$ and $\mu$ according to the subproblem 6;\\
		{\bf Until:}\\
		\hspace*{0.12in} {\bf For} $i = 1,2 \cdots ,v$ {\bf do}:\\
		\hspace*{0.36in} ${\left\| {{X^{(i)}} - {X^{(i)}}{Z^{(i)}} - E^{(i)}} \right\|_\infty } < \varepsilon, $\\
		\hspace*{0.36in} ${\left\| {{Z^{(i)}} - {U^{(i)}}V} \right\|_\infty } < \varepsilon, $\\
		\hspace*{0.12in} {\bf End}\\
		\hspace*{0.12in} and ${\left\| {V - L} \right\|_\infty } < \varepsilon. $
	\end{algorithm}
	
	\begin{table}
		\caption{{\color{black}{Statistics of nine datasets used in this paper.}}}\label{data_statistics}
		\begin{center}
			\begin{tabular}{l|c|c|c}
				\hline
				\bf {\color{black}{Dataset}} & \bf {\color{black}{Instance}} & \bf {\color{black}{Views}} & \bf {\color{black}{Clusters}}\\
				\hline \hline
				{\color{black}{MSRCV1}} & {\color{black}{210}} & {\color{black}{6}} & {\color{black}{7}} \\
				\hline
				{\color{black}{3-sources}} & {\color{black}{169}} & {\color{black}{3}} & {\color{black}{6}} \\
				\hline
				{\color{black}{BBC}} & {\color{black}{685}} & {\color{black}{4}} & {\color{black}{4}} \\
				\hline
				{\color{black}{NGs}} & {\color{black}{500}} & {\color{black}{3}} & {\color{black}{5}} \\
				\hline
				{\color{black}{Yale Face}} & {\color{black}{165}} & {\color{black}{3}} & {\color{black}{15}} \\
				\hline
				{\color{black}{Movie 617}} & {\color{black}{617}} & {\color{black}{2}} & {\color{black}{17}} \\
				\hline
				{\color{black}{BBCSport}} & {\color{black}{544}} & {\color{black}{2}} & {\color{black}{5}} \\
				\hline
				{\color{black}{UCI digits}} & {\color{black}{2000}} & {\color{black}{3}} & {\color{black}{10}} \\
				\hline
				{\color{black}{Caltech101}} & {\color{black}{441}} & {\color{black}{3}} & {\color{black}{7}} \\
				\hline
			\end{tabular}
		\end{center}
	\end{table}

	\subsection{Computational Complexity and Convergence}
	The main computational consists of five parts, i.e., the subproblems of 1-5. Staying the same with Table \ref{table_symbols}, $n$ is the number of samples, $v$ denotes the number of views, $d_i$ indicates the dimension of the $i$-th views and $k$ is a dimensional value introduced during factorization, as shown in the Equation (\ref{BilinearFactorization_Introducted}). According to Algorithm 1, the complexities of updating $V$ and $L$ are $O(kn^2+k^2n+k^3)$ and $O(n^3)$ respectively. As for the subproblem of updating $E^{(i)}$, the complexity is $O(d_in^2+n^3)$, and $O(k^2n+n^3)$ is the complexity of updating $U^{(i)}$. For updating $Z^{(i)}$, Sylvester equation is optimized, and the complexity is $O(d_i^3 +n^3)$.  To sum up, since $k$ and $v$ are much smaller than $d_i$ and $n$ in practice, the computational complexity of the proposed algorithm is $O(d_{max}^3+n^3)$, where ${d_{\max }} = \max (\{ {d_i}\} _{i = 1}^v)$.

	For the proposed method, it is difficult to prove its convergence, since more than two subproblems are involved during optimization. Inspired by \cite{yin2015dual,zhang2018discriminative,zhang2017discriminative}, the convergence discussion will be presented in the experiments section, and comprehensive results shown in next section illustrate the strong and stable convergence of the proposed algorithm.
	
	\section{Experiments}
	\label{Experiments}
	In this section, we demostrate the effectiveness of the porposed CBF-MSC. Experimental results and corresponding analysis are presented. All codes are implemented in Matlab on a desktop with a four-core 3.6GHz processor and 8GB of memory.
	\subsection{Experimental settings}
	Comprehensive experiments are conducted on nine real-world multi-view datasets to evaluate the proposed approach. To be specific, MSRCV1\footnote{http://research.microsoft.com/en-us/projects/objectclassrecognition/} consists of 210 image samples collected from 7 clusters with 6 views, including CENT, CMT, GIST, HOG, LBP, and SIFT. 3-sources\footnote{http://mlg.ucd.ie/datasets/3sources.html} is a news articles dataset which come from BBC, Reuters, and Guardian. BBC\footnote{http://mlg.ucd.ie/datasets/bbc.html} consists of 685 new documents from BBC and each of which is divided into 4 sub-parts. NGs\footnote{http://lig-membres.imag.fr/grimal/data.html} is a NewsGroups dataset consisting of 500 samples and has 3 views collected by 3 different methods. Yale Face\footnote{http://cvc.cs.yale.edu/cvc/projects/yalefaces/yalefaces.html} consists of 165 images from 15 individuals.  Movie 617\footnote{http://lig-membres.imag.fr/grimal/data/movies617.tar.gz}, is a movie dataset containing 617 movies of 17 genres, and consists of two views, including keywords and actors.
	{\color{black}{BBCSport \cite{xia2014RMSC} consists of 544 documents, each of which is divided into two sub-parts as two different views, and it is collected from the BBC Sport website corresponding to sports news in 5 topical areas, and the standard TF-IDF normalization is utilized to get the corresponding features. UCI digits \cite{asuncion2007uci} consist of 2000 images collected from 10 clusters with 3 views. Caltehch101 \cite{Caltech101} used in this section contains 441 instances collected from 7 clusters with 3 views.}}

	Clustering results of our method are compared with several baselines, as follows:
	
	1) ${\rm LRR_{BSV}}$ \cite{liu2013LRR} : Best Single-View clustering results based on Low-Rank Representation. LRR is employed on each single view, the best clustering performance based on the corresponding coefficient matrix is reported. 
	
	2) ${\rm SC_{BSV}}$ \cite{ng2002OnSpectralClustering} : Best Single-View clustering results achieved by Spectral Clustering. We perform spectral clustering on each single view, and report the best clustering performance. Both ${\rm LRR_{BSV}}$ and ${\rm SC_{BSV}}$ are employed here for demonstrating that the proposed multi-view clustering method can achieve the better clustering performance that sing-view clustering method.
	
	2) KerAdd \cite{cortes2009learning}: Kernel Addition tries to combine information of multiple views by constructing a single kernel matrix which is calculated by averaging kernel matrices of all views.
	
	3) Co-reg \cite{kumar2011Co-regMultiviewSpectralClustering}: It is a co-regularized multi-view spectral clustering which gets multi-view clustering results by pursuing the consistent properties of multi-view data.
	
	4) RMSC \cite{xia2014RMSC} : Robust Multi-view Spectral Clustering is an effective multi-view clustering approach, which recovers a shared low-rank transition probability matrix for clustering.
	
	5) AMGL \cite{IJCAI16AMGLnie2016parameter}: Auto-Weighted Multiple Graph Learning, which learn parameters of weights automatically for multiple graphs and can be employed for multi-view clustering.
	
	6) LMSC \cite{zhang2017LMSC}: Latent Multi-view Subspace Clustering, which explores the consensus information and complementary information by seeking a latent representation of all views for clustering.
	
	7) MLRSSC \cite{brbic2018MLRSSC}: Multi-view Low-Rank Sparse Subspace Clustering. Both the low-rank and sparsity constraints are employed to get an affinity matrix for mutli-view clustering. Linear kernel MLRSSC algorithm is employed here for comparison.
	
	8) GMC \cite{GMCTKDE2019}: Graph-based Multi-view Clustering. It learns the data graph of different views and achieve multi-view clustering results by fussing them into a integrated graph matrix.
	
	{\color{black}{For a fair comparison, we tune parameters of the comparing methods to seek the optimal performance by following the recommended setting in the original works.}} Furthermore, five widely used metrics \cite{manning2010introduction} for evaluation is employed here, including NMI (Normalized Mutual Information), ACC (ACCuracy), F-score, AVG (AVGent), and P (Precision). Excepting for AVG, higher values of all metrics indicate the better clustering performance for all metrics. For each dataset, 30 test runs with random initialization were conducted. Experimental results are reported in form of the average value and the standard deviation. 
	
	\subsection{Experimental results}
	Comprehensive experiments conducted on the benchmark datasets illustrate that the proposed CBF-MSC can achieve promising and competitive clustering results. That is to say under the assumption that all views have the coefficient matrices with the same clustering property, consensus information and complementary information are explored effectively during multi-view clustering.  
	
	The results of comparison experiments are presented in Table \ref{ClusteringResults_MSRCV1}-\ref{ClusteringResults_Caltech101}, and bold values indicate the best clustering performance. Overall speaking, for multi-view data, multi-view clustering methods can obtains more promising clustering results than single-view clustering method. For example, compared to $\rm{SC}_{BSV}$, the proposed CBF-MSC achieve about $23.35\%$ and $23.07\%$ improvements on the MSRCV1 datraset with repsect to the metrics of NMI and ACC, respectively.
	
	Moreover, as a whole, the proposed method outperforms all the competed methods on all nine benchmark datasets, since the CBF-MSC gets clustering results under a more suitable assumption and explores the underlying clustering structures of multi-view data more effectively. For example, on 3-sources dataset, the CBF-MSC gains an increase of $7.27\%$ and $9.82\%$ compared with the second best clustering results in the metrics of NMI and ACC, respectively. About $6.0\%$ and $5.72\%$ improvements are achieved on the Yale Face dataset in the metrics of NMI and ACC as well. Additionally, on the Movie 617 dataset, $2.04\%$ and $2.84\%$ increses are achieved with respect to the metric of NMI and ACC. It is notable that the proposed method is pretty robust among different benchmark datasets and clustering results of the CBF-MSC are promising as well.
	
	\begin{table*}
		\setlength{\tabcolsep}{4.5mm}
		\begin{center}
			\caption{Clustering Results on the MSRCV1 dataset.}\label{ClusteringResults_MSRCV1}
			\begin{tabular}{c|ccccc}
				\hline
				\bf Method & \bf NMI & \bf ACC & \bf F-score & \bf AVG & \bf P\\
				\hline
				${\rm LRR_{BSV}}$ & 0.5698(0.0068) & 0.6740(0.0081) & 0.5363(0.0082) & 1.2135(0.0204) & 0.5291(0.0097)\\
				\hline
				${\rm SC_{BSV}}$ & 0.6046(0.0184) & 0.6830(0.0257) & 0.5716(0.0211) & 1.1174(0.0526) & 0.5628(0.0224)\\
				\hline
				KerAdd & 0.6178(0.0092) & 0.7125(0.0134) & 0.5978(0.0103) & 1.0866(0.0270) & 0.5812(0.0117)\\
				\hline
				Co-reg & 0.6572(0.0103) & 0.7653(0.0186) & 0.6441(0.0140) & 0.9713(0.0301) & 0.6328(0.0156)\\
				\hline
				RMSC & 0.6732(0.0070) & 0.7886(0.0154) & 0.6661(0.0090) & 0.9253(0.0214) & 0.6559(0.0117)\\
				\hline
				AMGL & 0.7357(0.0281) & 0.7171(0.0847) & 0.6445(0.0601) & 0.8236(0.1205) & 0.5686(0.0889)\\
				\hline
				LMSC & 0.6149(0.0622) & 0.6948(0.0734) & 0.5909(0.0699) & 1.0965(0.1744) & 0.5728(0.0699)\\
				\hline
				MLRSSC &0.6709(0.0352) & 0.7775(0.0497) & 0.6524(0.0480) & 0.9290(0.0983) & 0.6452(0.0471)\\
				\hline
				GMC & 0.8200(0.0000) & 0.8952(0.0000) & 0.7997(0.0000) & 0.5155(0.0000) & 0.7856(0.0000)\\
				\hline
				CBF-MSC & \bf 0.8381(0.0036) & \bf 0.9137(0.0016) & \bf 0.8363(0.0030) & \bf 0.4595(0.0100) & \bf 0.8289(0.0028)\\
				\hline
			\end{tabular}
		\end{center}
	\end{table*}
	
	\begin{table*}
		\setlength{\tabcolsep}{4.5mm}
		\begin{center}
			\caption{Clustering Results on the 3-sources dataset.}\label{ClusteringResults_3sources}
			\begin{tabular}{c|ccccc}
				\hline
				\bf Method & \bf NMI & \bf ACC & \bf F-score & \bf AVG & \bf P\\
				\hline
				${\rm LRR_{BSV}}$ & 0.6348(0.0078) & 0.6783(0.0136) & 0.6158(0.0185) & 0.7958(0.0150) & 0.6736(0.0148)\\
				\hline
				${\rm SC_{BSV}}$ & 0.4688(0.0078) & 0.5567(0.0129) & 0.4746(0.0106) & 1.1880(0.0206) & 0.5175(0.0138)\\
				\hline
				KerAdd & 0.4623(0.0099) & 0.5437(0.0119) & 0.4689(0.0096) & 1.1972(0.0231) & 0.5218(0.0117)\\
				\hline
				Co-reg & 0.4790(0.0060) & 0.5067(0.0075) & 0.4352(0.0038) & 1.1693(0.0166) & 0.4629(0.0079)\\
				\hline
				RMSC & 0.5109(0.0100) & 0.5379(0.0108) & 0.4669(0.0097) & 1.0946(0.0254) & 0.4970(0.0136)\\
				\hline
				AMGL & 0.5865(0.0510) & 0.6726(0.0394) & 0.5895(0.0414) & 1.0841(0.1344) & 0.4865(0.0592)\\
				\hline
				LMSC & 0.6748(0.0195) &0.7059(0.0198) & 0.6451(0.0177) & 0.6827(0.0496) & \bf 0.7314(0.0237)\\
				\hline
				MLRSSC & 0.5919(0.0025) & 0.6686(0.0000) & 0.6353(0.0011) & 0.9378(0.0070) & 0.6410(0.0018)\\
				\hline
				GMC & 0.6216(0.0000) & 0.6923(0.0000) & 0.6047(0.0000) & 1.0375(0.0000) & 0.4844(0.0000)\\
				\hline
				CBF-MSC & \bf 0.7476(0.0048) & \bf 0.8041(0.0018) & \bf 0.7727(0.0019) & \bf 0.6322(0.0063) & 0.7181(0.0021)\\
				\hline
			\end{tabular}
		\end{center}
	\end{table*}
	
	\begin{table*}
		\setlength{\tabcolsep}{4.5mm}
		\begin{center}
			\caption{Clustering Results on the BBC dataset.}\label{ClusteringResults_BBC}
			\begin{tabular}{c|ccccc}
				\hline
				\bf Method & \bf NMI & \bf ACC & \bf F-score & \bf AVG & \bf P\\
				\hline
				${\rm LRR_{BSV}}$ & 0.5314(0.0009) & 0.7332(0.0009) & 0.5806(0.0010) & 1.0170(0.0020) & 0.5940(0.0012)\\
				\hline
				${\rm SC_{BSV}}$ & 0.2930(0.0025) & 0.4537(0.0056) & 0.4019(0.0042) & 1.5820(0.0094) & 0.6854(0.0068)\\
				\hline
				KerAdd & 0.4716(0.0051) & 0.6591(0.0144) & 0.5636(0.0105) & 1.1717(0.0112) & 0.5475(0.0090)\\
				\hline
				Co-reg & 0.4183(0.0077) & 0.6160(0.0129) & 0.5070(0.0103) & 1.2824(0.0157) & 0.4964(0.0091)\\
				\hline
				RMSC & 0.5412(0.0076) & 0.6861(0.0225) & 0.5697(0.0084) & 0.9936(0.0153) & 0.5838(0.0061)\\
				\hline
				AMGL & 0.5185(0.0725) & 0.6261(0.0698) & 0.6050(0.0527) & 1.2376(0.1686) & 0.4776(0.0588)\\
				\hline
				LMSC & 0.5594(0.0409) & 0.7394(0.0671) & 0.6291(0.0491) & 0.9455(0.0921) & 0.6600(0.0525)\\
				\hline
				MLRSSC & 0.6935(0.0004) & 0.8556(0.0004) & 0.7897(0.0003) & \bf 0.6681(0.0010) & \bf 0.7966(0.0005)\\
				\hline
				GMC & 0.5628(0.0000) & 0.6934(0.0000) & 0.6333(0.0000) & 1.1288(0.0000) & 0.5012(0.0000)\\
				\hline
				CBF-MSC & \bf 0.6957(0.0012) & \bf 0.8700(0.0007) & \bf 0.7969(0.0009) &  0.6709(0.0030) &  0.7913(0.0015)\\
				\hline
			\end{tabular}
		\end{center}
	\end{table*}
	
	\begin{table*}
		\setlength{\tabcolsep}{4.5mm}
		\begin{center}
			\caption{Clustering Results on the NGs dataset.}\label{ClusteringResults_NGs}
			\begin{tabular}{c|ccccc}
				\hline
				\bf Method & \bf NMI & \bf ACC & \bf F-score & \bf AVG & \bf P\\
				\hline
				${\rm LRR_{BSV}}$ & 0.3402(0.0201) & 0.4213(0.0184) & 0.3911(0.0056) & 1.7056(0.0461) & 0.2688(0.0065)\\
				\hline
				${\rm SC_{BSV}}$ & 0.0163(0.0005) & 0.2044(0.0003) & 0.3302(0.0000) & 2.3023(0.0006) & 0.1984(0.0000)\\
				\hline
				KerAdd & 0.1284(0.0013) & 0.3571(0.0017) & 0.2868(0.0019) & 2.0334(0.0033) & 0.2635(0.0010)\\
				\hline
				Co-reg & 0.1815(0.0052) & 0.2999(0.0038) & 0.3391(0.0016) & 2.0142(0.0107) & 0.2217(0.0016)\\
				\hline
				RMSC & 0.1580(0.0099) & 0.3700(0.0081) & 0.3070(0.0058) & 1.9755(0.0236) & 0.2664(0.0074)\\
				\hline
				AMGL & 0.8987(0.0464) & 0.9393(0.0903) & 0.9212(0.0709) & 0.2473(0.1385) & 0.9088(0.1024)\\
				\hline
				LMSC & 0.9052(0.0075) & 0.9705(0.0026) & 0.9417(0.0050) & 0.2203(0.0173) & 0.9415(0.0051)\\
				\hline
				MLRSSC & 0.8860(0.0000) & 0.9620(0.0000) & 0.9255(0.0000) & 0.2651(0.0000) &  0.9252(0.0000)\\
				\hline
				GMC & 0.9392(0.0000) & 0.9820(0.0000) & 0.9643(0.0000) & 0.1413(0.0000) & 0.9642(0.0000)\\
				\hline
				CBF-MSC & \bf 0.9478(0.0000) & \bf 0.9840(0.0000) & \bf 0.9682(0.0000) &  \bf 0.1212(0.0000) &  \bf0.9682(0.0000)\\
				\hline
			\end{tabular}
		\end{center}
	\end{table*}
	
	\begin{table*}
		\setlength{\tabcolsep}{4.5mm}
		\begin{center}
			\caption{Clustering Results on the Yale Face dataset.}\label{ClusteringResults_YaleFace}
			\begin{tabular}{c|ccccc}
				\hline
				\bf Method & \bf NMI & \bf ACC & \bf F-score & \bf AVG & \bf P\\
				\hline
				${\rm LRR_{BSV}}$ & 0.7134(0.0098) & 0.7034(0.0125) & 0.5561(0.0159) & 1.1328(0.0390) & 0.5404(0.0176)\\
				\hline
				${\rm SC_{BSV}}$ & 0.6429(0.0087) & 0.6031(0.0137) & 0.4556(0.0117) & 1.4182(0.0339) & 0.4335(0.0123)\\
				\hline
				KerAdd & 0.6692(0.0085) & 0.6133(0.0136) & 0.4922(0.0116) & 1.3181(0.0346) & 0.4665(0.0127)\\
				\hline
				Co-reg & 0.6178(0.0098) & 0.5686(0.0128) & 0.4251(0.0127) & 1.5138(0.0389) & 0.4061(0.0131)\\
				\hline
				RMSC & 0.6812(0.0089) & 0.6283(0.0146) & 0.5059(0.0119) & 1.2692(0.0365) & 0.4819(0.0137)\\
				\hline
				AMGL & 0.6438(0.0192) & 0.6046(0.0399) & 0.3986(0.0323) & 1.4710(0.0919) & 0.3378(0.0431)\\
				\hline
				LMSC & 0.7011(0.0096) & 0.6691(0.0095) & 0.5031(0.0151) & 1.2062(0.0391) & 0.4638(0.0175)\\
				\hline
				MLRSSC & 0.7005(0.0311) & 0.6733(0.0384) & 0.5399(0.0377) & 1.1847(0.1206) & 0.5230(0.0378)\\
				\hline
				GMC & 0.6892(0.0000) & 0.6545(0.0000) & 0.4801(0.0000) & 1.2753(0.0000) & 0.4188(0.0000)\\
				\hline
				CBF-MSC & \bf 0.7734(0.0147) & \bf 0.7606(0.0309) & \bf 0.6163(0.0246) &  \bf 0.9110(0.0598) &  \bf 0.5857(0.0278)\\
				\hline
			\end{tabular}
		\end{center}
	\end{table*}
	
	\begin{table*}
		\setlength{\tabcolsep}{4.5mm}
		\begin{center}
			\caption{Clustering Results on the Movie 617 dataset.}\label{ClusteringResults_M617}
			\begin{tabular}{c|ccccc}
				\hline
				\bf Method & \bf NMI & \bf ACC & \bf F-score & \bf AVG & \bf P\\
				\hline
				${\rm LRR_{BSV}}$ & 0.2690(0.0063) & 0.2767(0.0093) & 0.1566(0.0040) & 2.9462(0.0250) & 0.1528(0.0042)\\
				\hline
				${\rm SC_{BSV}}$ & 0.2600(0.0026) & 0.2567(0.0040) & 0.1473(0.0020) & 2.9931(0.0104) & 0.1365(0.0018)\\
				\hline
				KerAdd & 0.2925(0.0026) & 0.2912(0.0045) & 0.1764(0.0035) & 2.8606(0.0109) & 0.1667(0.0038)\\
				\hline
				Co-reg &  0.2456(0.0019) & 0.2409(0.0022) & 0.1389(0.0017) & 3.0446(0.0075) & 0.1324(0.0016)\\
				\hline
				RMSC & 0.2969(0.0023) & 0.2986(0.0043) & 0.1819(0.0024) & 2.8498(0.0095) & 0.1674(0.0024)\\
				\hline
				AMGL & 0.2607(0.0088) & 0.2563(0.0124) & 0.1461(0.0055) & 3.1105(0.0387) & 0.0971(0.0063)\\
				\hline
				LMSC & 0.2796(0.0096) & 0.2694(0.0133) & 0.1601(0.0088) & 2.9129(0.0388) & 0.1512(0.0092)\\
				\hline
				MLRSSC & 0.2975(0.0061) & 0.2887(0.0111) & 0.1766(0.0068) & 2.8481(0.0216) & 0.1619(0.0064)\\
				\hline
				GMC & 0.2334(0.0000) & 0.18634(0.0000) & 0.1242(0.0000) & 3.3795(0.0000) & 0.0682(0.0000)\\
				\hline
				CBF-MSC & \bf 0.3179(0.0079) & \bf0.3171(0.0070) & \bf 0.1947(0.0062) &  \bf 2.7520(0.0337) &  \bf 0.1893(0.0077)\\
				\hline
			\end{tabular}
		\end{center}
	\end{table*}
	
	\begin{table*}
		\setlength{\tabcolsep}{4.5mm}
		\begin{center}
			\caption{{\color{black}{Clustering Results on the BBCSport dataset.}}}\label{ClusteringResults_BBCSport}
			\begin{tabular}{c|ccccc}
				\hline
				\bf {\color{black}{Method}} & \bf {\color{black}{NMI}} & \bf {\color{black}{ACC}} & \bf {\color{black}{F-score}} & \bf {\color{black}{AVG}} & \bf {\color{black}{P}}\\
				\hline
				{\color{black}{${\rm LRR_{BSV}}$}} & {\color{black}{0.6996(0.0000)}} & {\color{black}{0.7970(0.0015)}} & {\color{black}{0.7612(0.0001)}} & {\color{black}{0.7269(0.0006)}} & {\color{black}{0.6890(0.0001)}}\\
				\hline
				{\color{black}{${\rm SC_{BSV}}$}} & {\color{black}{0.7182(0.0054)}} & {\color{black}{0.8456(0.0099)}} & {\color{black}{0.7671(0.0067)}} & {\color{black}{0.6070(0.0118)}} & {\color{black}{0.7853(0.0063)}}\\
				\hline
				{\color{black}{KerAdd}} & {\color{black}{0.6170(0.0074)}} & {\color{black}{0.6170(0.0070)}} & {\color{black}{0.6696(0.0061)}} & {\color{black}{0.8471(0.0185)}} & {\color{black}{0.6638(0.0078)}}\\
				\hline
				{\color{black}{Co-reg}} & {\color{black}{0.7185(0.0031)}} & {\color{black}{0.8465(0.0050)}} & {\color{black}{0.7674(0.0041)}} & {\color{black}{0.6062(0.0066)}} & {\color{black}{0.7859(0.0033)}}\\
				\hline
				{\color{black}{RMSC}} & {\color{black}{0.8124(0.0074)}} & {\color{black}{0.8562(0.0198)}} & {\color{black}{0.8514(0.0132)}} & {\color{black}{0.4159(0.0149)}} & {\color{black}{0.8566(0.0105)}}\\
				\hline
				{\color{black}{AMGL}} & {\color{black}{0.8640(0.0681)}} & {\color{black}{0.9189(0.0870)}} & {\color{black}{0.9008(0.0868)}} & {\color{black}{0.3305(0.1858)}} & {\color{black}{0.8708(0.1188)}}\\
				\hline
				{\color{black}{LMSC}} & {\color{black}{0.8393(0.0043)}} & {\color{black}{0.9180(0.0031)}} & {\color{black}{0.8996(0.0033)}} & {\color{black}{0.3608(0.0094)}} & {\color{black}{0.8938(0.0036)}}\\
				\hline
				{\color{black}{MLRSSC}} & {\color{black}{0.8855(0.0000)}} & \bf {\color{black}{0.9651(0.0000)}} & \bf {\color{black}{0.9296(0.0000)}} & {\color{black}{0.2437(0.0000)}} & \bf {\color{black}{0.9384(0.0000)}}\\
				\hline
				{\color{black}{GMC}} & {\color{black}{0.7954(0.0000)}} & {\color{black}{0.7390(0.0000)}} & {\color{black}{0.7207(0.0000)}} & {\color{black}{0.6450(0.0000)}} & {\color{black}{0.5728(0.0000)}}\\
				\hline
				{\color{black}{CBF-MSC}} & \bf {\color{black}{0.8911(0.0000)}} & \bf {\color{black}{0.9651(0.0000)}} & {\color{black}{0.9265(0.0000)}} & \bf {\color{black}{0.2310(0.0000)}} & {\color{black}{0.9360(0.0000)}}\\
				\hline
			\end{tabular}
		\end{center}
	\end{table*}
	
	\begin{table*}
		\setlength{\tabcolsep}{4.5mm}
		\begin{center}
			\caption{{\color{black}{Clustering Results on the UCI digits dataset.}}}\label{ClusteringResults_UCI}
			\begin{tabular}{c|ccccc}
				\hline
				\bf {\color{black}{Method}} & \bf {\color{black}{NMI}} & \bf {\color{black}{ACC}} & \bf {\color{black}{F-score}} & \bf {\color{black}{AVG}} & \bf {\color{black}{P}}\\
				\hline
				{\color{black}{${\rm LRR_{BSV}}$}} & {\color{black}{0.6065(0.0008)}} & {\color{black}{0.6202(0.0007)}} & {\color{black}{0.5352(0.0005)}} & {\color{black}{1.3144(0.0027)}} & {\color{black}{0.52623(0.0005)}}\\
				\hline
				{\color{black}{${\rm SC_{BSV}}$}} & {\color{black}{0.6378(0.0047)}} & {\color{black}{0.6771(0.0114)}} & {\color{black}{0.5739(0.0067)}} & {\color{black}{1.2114(0.0165)}} & {\color{black}{0.5641(0.0085)}}\\
				\hline
				{\color{black}{KerAdd}} & {\color{black}{0.7849(0.0034)}} & {\color{black}{0.8343(0.0135)}} & {\color{black}{0.7524(0.0075)}} & {\color{black}{0.7255(0.0139)}} & {\color{black}{0.7386(0.0116)}}\\
				\hline
				{\color{black}{Co-reg}} & {\color{black}{0.7169(0.0058)}} & {\color{black}{0.7821(0.0143)}} & {\color{black}{0.6843(0.0092)}} & {\color{black}{0.9492(0.0207)}} & {\color{black}{0.6732(0.0112)}}\\
				\hline
				{\color{black}{RMSC}} & {\color{black}{0.8259(0.0085)}} & {\color{black}{0.8550(0.0161)}} & {\color{black}{0.8014(0.0146)}} & {\color{black}{0.5897(0.0304)}} & {\color{black}{0.7876(0.0179)}}\\
				\hline
				{\color{black}{AMGL}} & {\color{black}{0.8038(0.0175)}} & {\color{black}{0.7822(0.0447)}} & {\color{black}{0.7359(0.0370)}} & {\color{black}{0.6886(0.0712)}} & {\color{black}{0.6965(0.0521)}}\\
				\hline
				{\color{black}{LMSC}} & {\color{black}{0.7826(0.0256)}} & {\color{black}{0.8578(0.0298)}} & {\color{black}{0.7618(0.0368)}} & {\color{black}{0.7256(0.0856)}} & {\color{black}{0.7571(0.0379)}}\\
				\hline
				{\color{black}{MLRSSC}} & \bf {\color{black}{0.8365(0.0005)}} & {\color{black}{0.9137(0.0002)}} & \bf {\color{black}{0.8400(0.0004)}} & \bf {\color{black}{0.5456(0.0017)}} & {\color{black}{0.8366(0.0004)}}\\
				\hline
				{\color{black}{GMC}} & {\color{black}{0.8153(0.0000)}} & {\color{black}{0.7355(0.0000)}} & {\color{black}{0.7134(0.0000)}} & {\color{black}{0.6788(0.0000)}} & {\color{black}{0.6443(0.0000)}}\\
				\hline
				{\color{black}{CBF-MSC}} & {\color{black}{0.8357(0.0001)}} & \bf {\color{black}{0.9145(0.0001)}} & \bf {\color{black}{0.8400(0.0001)}} & {\color{black}{0.5471(0.0004)}} & \bf {\color{black}{0.8380(0.0001)}}\\
				\hline
			\end{tabular}
		\end{center}
	\end{table*}
	
	\begin{table*}
		\setlength{\tabcolsep}{4.5mm}
		\begin{center}
			\caption{{\color{black}{Clustering Results on the Caltech101 dataset.}}}\label{ClusteringResults_Caltech101}
			\begin{tabular}{c|ccccc}
				\hline
				\bf {\color{black}{Method}} & \bf {\color{black}{NMI}} & \bf {\color{black}{ACC}} & \bf {\color{black}{F-score}} & \bf {\color{black}{AVG}} & \bf {\color{black}{P}}\\
				\hline
				{\color{black}{${\rm LRR_{BSV}}$}} & {\color{black}{0.4395(0.0018)}} & {\color{black}{0.5284(0.0009)}} & {\color{black}{0.4744(0.0014)}} & {\color{black}{1.5236(0.0042)}} & {\color{black}{0.4509(0.0007)}}\\
				\hline
				{\color{black}{${\rm SC_{BSV}}$}} & {\color{black}{0.4429(0.0052)}} & {\color{black}{0.5491(0.0094)}} & {\color{black}{0.5001(0.0060)}} & {\color{black}{1.5226(0.0156)}} & {\color{black}{0.4688(0.0070)}}\\
				\hline
				{\color{black}{KerAdd}} & {\color{black}{0.3615(0.0053)}} & {\color{black}{0.4931(0.0067)}} & {\color{black}{0.4194(0.0072)}} & {\color{black}{1.7361(0.0111)}} & {\color{black}{0.3932(0.0031)}}\\
				\hline
				{\color{black}{Co-reg}} & {\color{black}{0.3826(0.0034)}} & {\color{black}{0.5085(0.0072)}} & {\color{black}{0.4302(0.0062)}} & {\color{black}{1.6697(0.0090)}} & {\color{black}{0.4172(0.0041)}}\\
				\hline
				{\color{black}{RMSC}} & {\color{black}{0.3736(0.0058)}} & {\color{black}{0.5100(0.0098)}} & {\color{black}{0.4172(0.0081)}} & {\color{black}{1.6733(0.0147)}} & {\color{black}{0.4359(0.0053)}}\\
				\hline
				{\color{black}{AMGL}} & {\color{black}{0.3927(0.0199)}} & {\color{black}{0.4720(0.0154)}} & {\color{black}{0.3651(0.0072)}} & {\color{black}{1.7756(0.0539)}} & {\color{black}{0.2644(0.0171)}}\\
				\hline
				{\color{black}{LMSC}} & {\color{black}{0.4372(0.0179)}} & {\color{black}{0.5182(0.0074)}} & {\color{black}{0.4517(0.0082)}} & {\color{black}{1.5276(0.0592)}} & {\color{black}{0.4344(0.0227)}}\\
				\hline
				{\color{black}{MLRSSC}} & {\color{black}{0.4817(0.0007)}} & {\color{black}{0.5312(0.0010)}} & {\color{black}{0.4694(0.0012)}} & {\color{black}{1.4181(0.0016)}} & {\color{black}{0.4326(0.0006)}}\\
				\hline
				{\color{black}{GMC}} & {\color{black}{0.4240(0.0000)}} & {\color{black}{0.4785(0.0000)}} & {\color{black}{0.3556(0.0000)}} & {\color{black}{1.7512(0.0000)}} & {\color{black}{0.2425(0.0000)}}\\
				\hline
				{\color{black}{CBF-MSC}} & \bf {\color{black}{0.4870(0.0009)}} & \bf {\color{black}{0.6165(0.0008)}} & \bf {\color{black}{0.5068(0.0012)}} & \bf {\color{black}{1.3655(0.0025)}} & \bf {\color{black}{0.5281(0.0013)}}\\
				\hline
			\end{tabular}
		\end{center}
	\end{table*}
	
	\subsection{Convergence Analysis}
	Convergence properties of the proposed algorithm are analyzed in this section as well. To be specific, three convergence conditions are defined as follows:
	\begin{equation}
	\begin{array}{l}
	X_{\rm{Convergence }}= \frac{1}{v}\sum\limits_{i = 1}^v {{{\left\| {{X^{(i)}} - {X^{(i)}}{Z^{(i)}} - {E^{(i)}}} \right\|}_\infty }} \\
	Z_{\rm{Convergence}}=\frac{1}{v}\sum\limits_{i = 1}^v {{{\left\| {{Z^{(i)}} - {U^{(i)}}V} \right\|}_\infty }} \\
	V_{\rm{Convergence}}={\left\| {V - L} \right\|_\infty }
	\end{array}
	\end{equation}
	
	As shown in Fig. \ref{Conv_ana}, three convergence curves corresponding to $X_{\rm{Convergence}}$, $Z_{\rm{Convergence}}$, and $V_{\rm{Convergence}}$ versus the iteration numbers are presented and it is clear that the proposed algorithm can achieve convergence within 30 iterations on all benchmark datasets. Although the solid proof of the convergence is difficult to deliver, experimental results verifies the effectiveness and good convergence of the proposed algorithm.
	
	\begin{figure*}
		\begin{center}
			\includegraphics[width=1\linewidth]{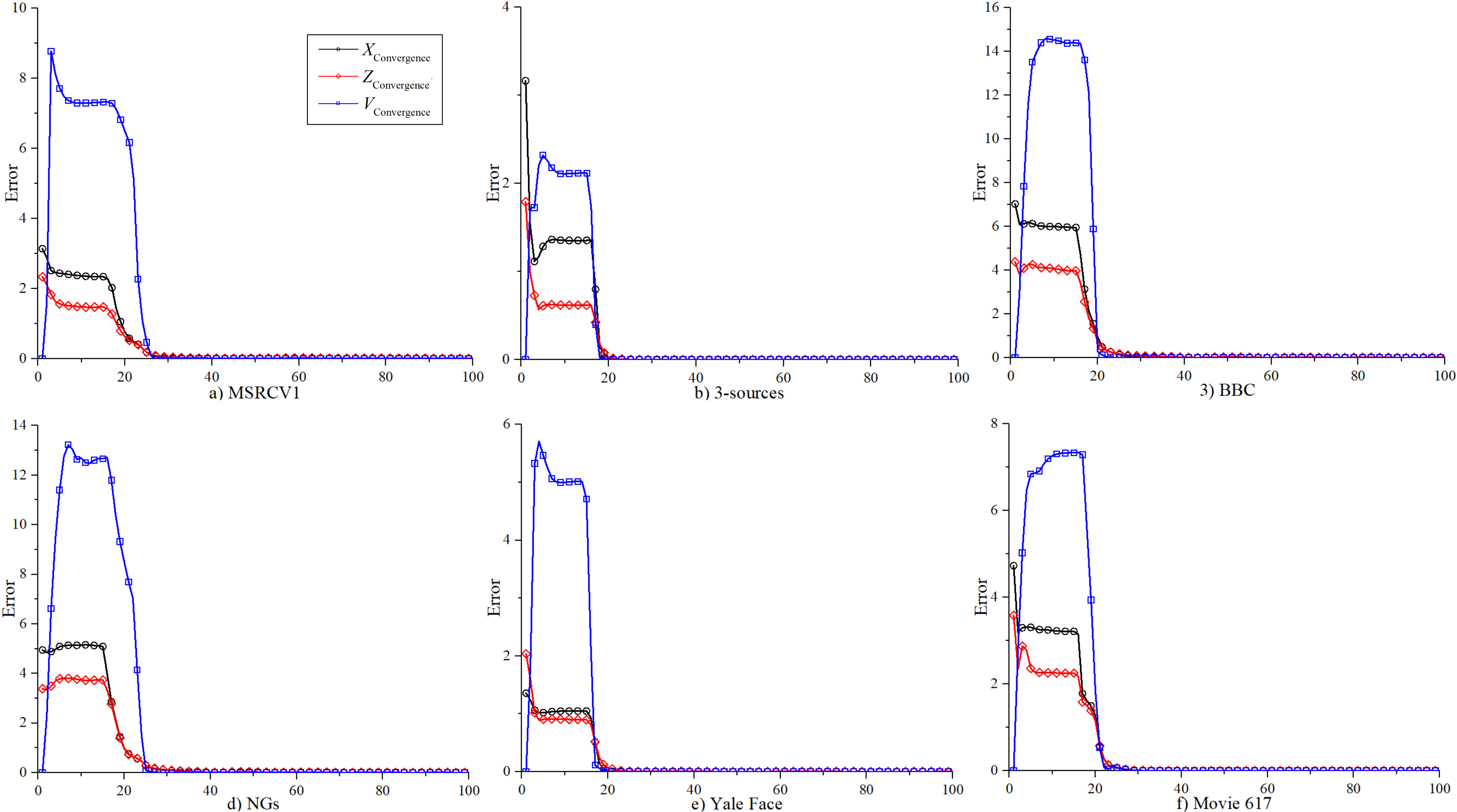}
		\end{center}
		\caption{Convergence curves of the propose method conducted on six benchmark datasets. Empirically, these curves demonstrate the stable and strong convergence of our CBF-MSC.}
		\label{Conv_ana}
	\end{figure*}
	
	\subsection{Parameters Sensitivity}
	Parameter sensitivity of the proposed method will be disscused in this section. Two parameters, which affect the clusteirng performance, should be tuned, i.e., $\lambda$ and $k$. $\lambda$ is a tradeoff parameter, shown in Equation (\ref{CBFMSC_ObjectiveFunction}). Generally speaking, the prior information about the data, such as the noise level, is important for the tuning of $\lambda$. $k$ is a dimensional parameter introduced in Equation (\ref{BilinearFactorization_Introducted}). To be specific, given a dataset with $n$ data samples drawn from $c$ subspaces, we set $k$ belong to $[c,n]$, and for simplicity, $k$ is chosen from the set $\left\{ {c,2c,3c,...} \right\}$ and let $k<n$ simultaneously.
	
	Taking MSRCV1 for example, Fig. \ref{lambda_tuning} and Fig. \ref{k_tuning} present the clustering results with different values of $\lambda$ and $k$, respectively. It can be observed that $\lambda=100$ achieves the promising clustering results. As for parameter $k$, as a whole, it is clear that the clustering performance is robust to different values of $k$ and $k=35$ obtains the best clustering results.
	
	\begin{figure}
		\begin{center}
			\includegraphics[width=1\linewidth]{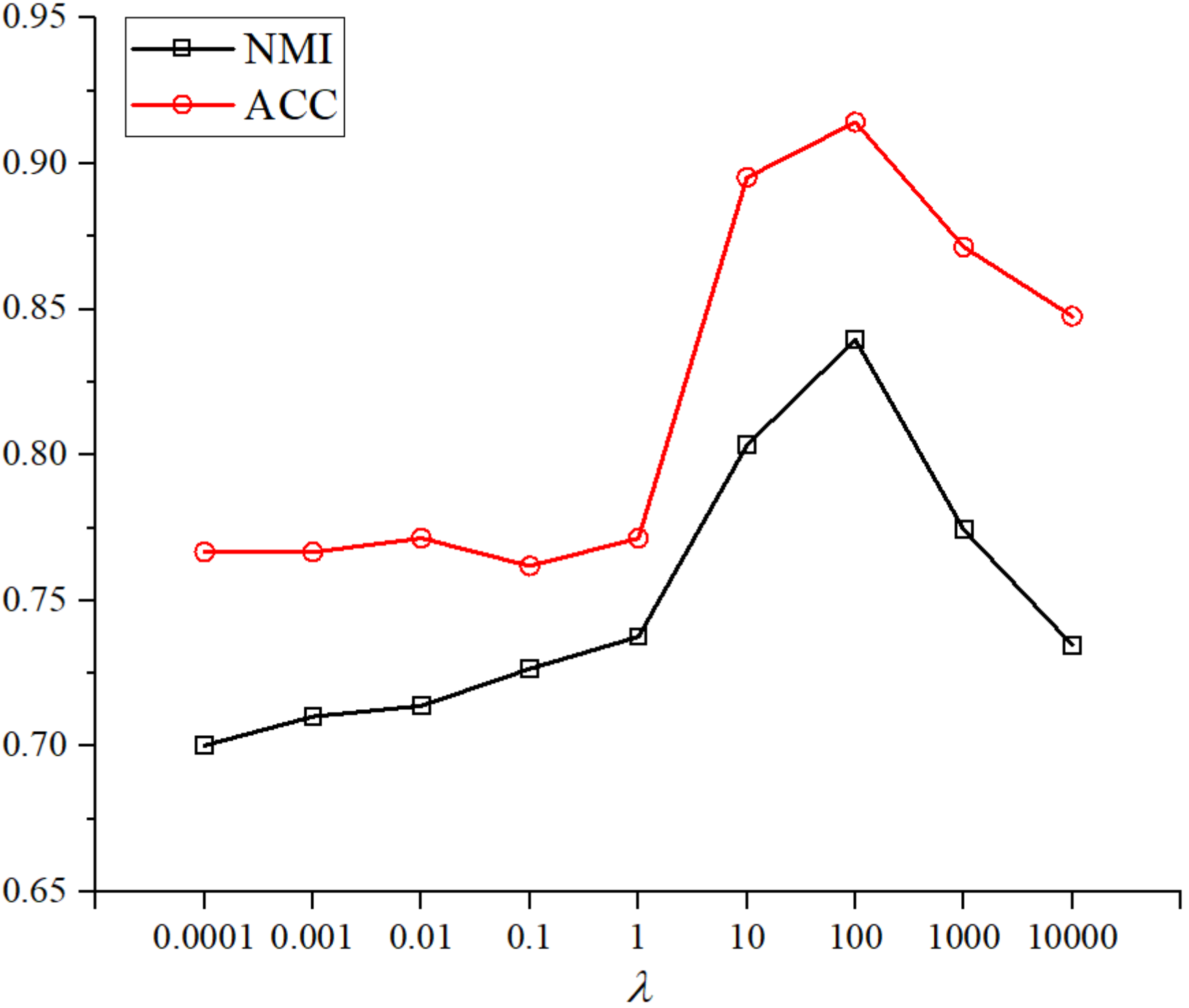}
		\end{center}
		\caption{Clustering results with $k=35$ and different values of $\lambda$ on MSRCV1.}
		\label{lambda_tuning}
	\end{figure}
	
	\begin{figure}
		\begin{center}
			\includegraphics[width=1\linewidth]{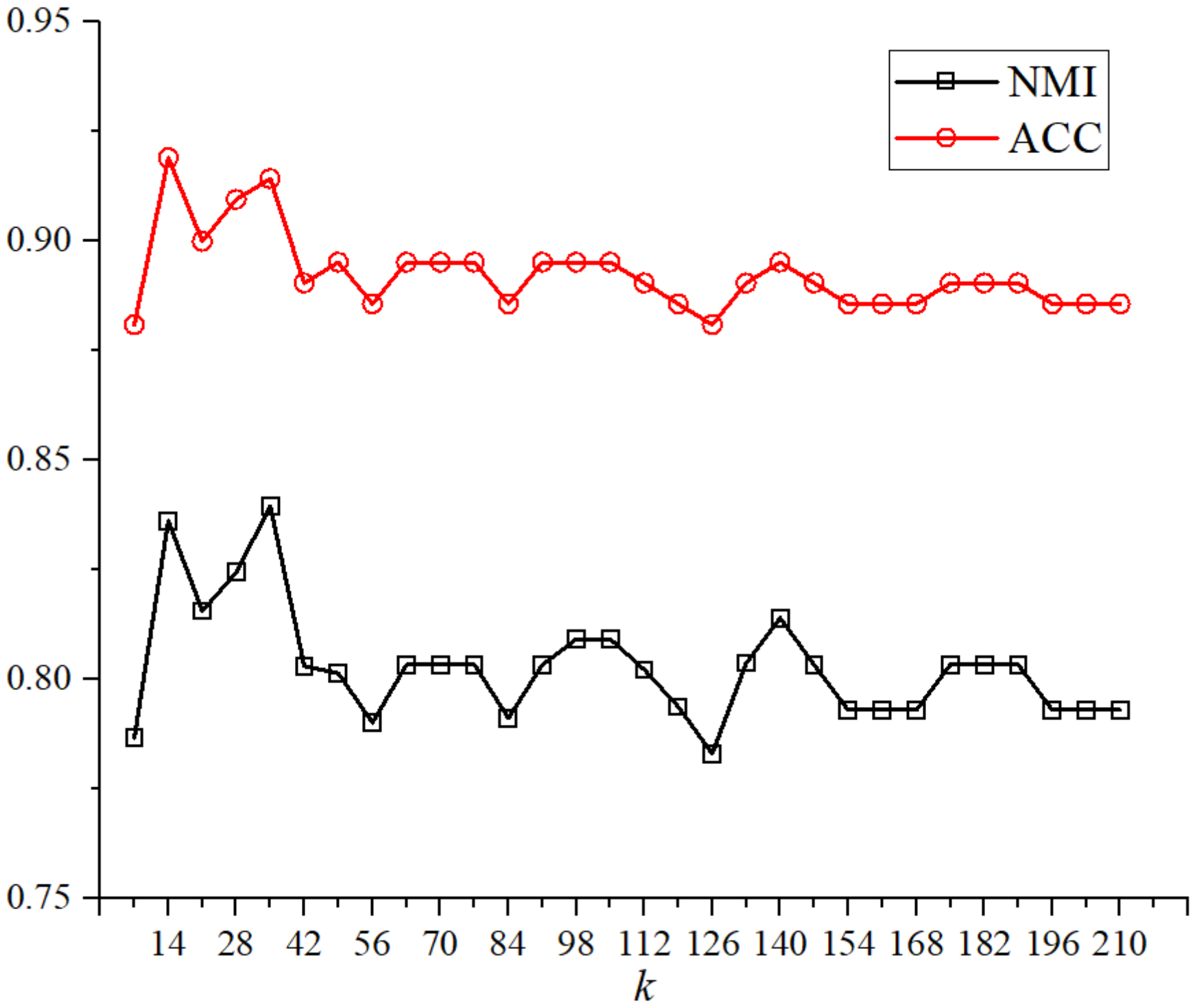}
		\end{center}
		\caption{Clustering results with $\lambda=100$ and different values of $k$ on MSRCV1.}
		\label{k_tuning}
	\end{figure}

	\section{Conclusion}
	\label{Conclusion}
	In this paper, we develop a novel multi-view subspace clustering approach, dubbed CBF-MSC. {\color{black}{Different from most exsiting multi-view clustering methods, the proposed method assumes that the coefficient matrices of different views have the same clustering structure.}} To achieve the assumption, the orthonormality constrained bilinear factorization is introduced on the coefficient matrices of all views. Furthermore, an effective optimization is also developed in this paper to solve the objective function of the corresponding problem. Comprehensive experimental results validate the effectiveness of the proposed method and show that our CBF-MSC outperforms several state-of-the-arts.
	
	{\color{black}{Despite the promising clustering performance can be achieved, the proposed method is time consuming since the SVD decomposition and matrix inversion are involved during the optimization procedure. The improvement for large-scale data will be focused on in further works by leveraging the Hashing technique and the dimensionality reduction strategies.}}
	
	\section*{Acknowledgements}
	
	This work is supported by the National Natural Science Foundation of China under Grant No. 61573273.
	
	\printcredits
	
	\bibliographystyle{cas-model2-names}
	
	\bibliography{CBFMSC}

\begin{thebibliography}{54}
\expandafter\ifx\csname natexlab\endcsname\relax\def\natexlab#1{#1}\fi
\providecommand{\url}[1]{\texttt{#1}}
\providecommand{\href}[2]{#2}
\providecommand{\path}[1]{#1}
\providecommand{\DOIprefix}{doi:}
\providecommand{\ArXivprefix}{arXiv:}
\providecommand{\URLprefix}{URL: }
\providecommand{\Pubmedprefix}{pmid:}
\providecommand{\doi}[1]{\href{http://dx.doi.org/#1}{\path{#1}}}
\providecommand{\Pubmed}[1]{\href{pmid:#1}{\path{#1}}}
\providecommand{\bibinfo}[2]{#2}
\ifx\xfnm\relax \def\xfnm[#1]{\unskip,\space#1}\fi
\bibitem[{Vidal(2011)}]{vidal2011SubspaceClustering}
\bibinfo{author}{Vidal, R.}, \bibinfo{year}{2011}.
\newblock \bibinfo{title}{Subspace clustering}.
\newblock \bibinfo{journal}{IEEE Signal Processing Magazine}
  \bibinfo{volume}{28}, \bibinfo{pages}{52--68}.
\bibitem[{Elhamifar and Vidal(2013)}]{elhamifar2013SparseSubspaceClustering}
\bibinfo{author}{Elhamifar, E.}, \bibinfo{author}{Vidal, R.},
  \bibinfo{year}{2013}.
\newblock \bibinfo{title}{Sparse subspace clustering: Algorithm, theory, and
  applications}.
\newblock \bibinfo{journal}{IEEE transactions on pattern analysis and machine
  intelligence} \bibinfo{volume}{35}, \bibinfo{pages}{2765--2781}.
\bibitem[{Liu et~al.(2013)Liu, Lin, Yan, Sun, Yu and Ma}]{liu2013LRR}
\bibinfo{author}{Liu, G.}, \bibinfo{author}{Lin, Z.}, \bibinfo{author}{Yan,
  S.}, \bibinfo{author}{Sun, J.}, \bibinfo{author}{Yu, Y.},
  \bibinfo{author}{Ma, Y.}, \bibinfo{year}{2013}.
\newblock \bibinfo{title}{Robust recovery of subspace structures by low-rank
  representation}.
\newblock \bibinfo{journal}{IEEE transactions on pattern analysis and machine
  intelligence} \bibinfo{volume}{35}, \bibinfo{pages}{171--184}.
\bibitem[{Wang et~al.(2013)Wang, Xu and Leng}]{wang2013LRSSC}
\bibinfo{author}{Wang, Y.X.}, \bibinfo{author}{Xu, H.}, \bibinfo{author}{Leng,
  C.}, \bibinfo{year}{2013}.
\newblock \bibinfo{title}{Provable subspace clustering: When lrr meets ssc},
  in: \bibinfo{booktitle}{Advances in Neural Information Processing Systems},
  pp. \bibinfo{pages}{64--72}.
\bibitem[{Hu et~al.(2014)Hu, Lin, Feng and Zhou}]{hu2014SMoothRepresentation}
\bibinfo{author}{Hu, H.}, \bibinfo{author}{Lin, Z.}, \bibinfo{author}{Feng,
  J.}, \bibinfo{author}{Zhou, J.}, \bibinfo{year}{2014}.
\newblock \bibinfo{title}{Smooth representation clustering}, in:
  \bibinfo{booktitle}{Proceedings of the IEEE Conference on Computer Vision and
  Pattern Recognition}, pp. \bibinfo{pages}{3834--3841}.
\bibitem[{Chen and Yi(2019)}]{chen2019localityKBS}
\bibinfo{author}{Chen, Y.}, \bibinfo{author}{Yi, Z.}, \bibinfo{year}{2019}.
\newblock \bibinfo{title}{Locality-constrained least squares regression for
  subspace clustering}.
\newblock \bibinfo{journal}{Knowledge-Based Systems} \bibinfo{volume}{163},
  \bibinfo{pages}{51--56}.
\bibitem[{Xu et~al.(2013)Xu, Tao and Xu}]{xu2013ASurveyonMultiviewLearning}
\bibinfo{author}{Xu, C.}, \bibinfo{author}{Tao, D.}, \bibinfo{author}{Xu, C.},
  \bibinfo{year}{2013}.
\newblock \bibinfo{title}{A survey on multi-view learning}.
\newblock \bibinfo{journal}{arXiv preprint arXiv:1304.5634} .
\bibitem[{Zhao et~al.(2017)Zhao, Xie, Xu and
  Sun}]{zhao2017MultiviewLearningOverview}
\bibinfo{author}{Zhao, J.}, \bibinfo{author}{Xie, X.}, \bibinfo{author}{Xu,
  X.}, \bibinfo{author}{Sun, S.}, \bibinfo{year}{2017}.
\newblock \bibinfo{title}{Multi-view learning overview: Recent progress and new
  challenges}.
\newblock \bibinfo{journal}{Information Fusion} \bibinfo{volume}{38},
  \bibinfo{pages}{43--54}.
\bibitem[{Lowe(2004)}]{lowe2004SIFT}
\bibinfo{author}{Lowe, D.G.}, \bibinfo{year}{2004}.
\newblock \bibinfo{title}{Distinctive image features from scale-invariant
  keypoints}.
\newblock \bibinfo{journal}{International journal of computer vision}
  \bibinfo{volume}{60}, \bibinfo{pages}{91--110}.
\bibitem[{Ojala et~al.(2002)Ojala, Pietik{\"a}inen and
  M{\"a}enp{\"a}{\"a}}]{ojala2002LBP}
\bibinfo{author}{Ojala, T.}, \bibinfo{author}{Pietik{\"a}inen, M.},
  \bibinfo{author}{M{\"a}enp{\"a}{\"a}, T.}, \bibinfo{year}{2002}.
\newblock \bibinfo{title}{Multiresolution gray-scale and rotation invariant
  texture classification with local binary patterns}.
\newblock \bibinfo{journal}{IEEE Transactions on Pattern Analysis \& Machine
  Intelligence} \bibinfo{volume}{24}, \bibinfo{pages}{971--987}.
\bibitem[{Chao et~al.(2017)Chao, Sun and
  Bi}]{chao2017ASurveyonMultiviewClustering}
\bibinfo{author}{Chao, G.}, \bibinfo{author}{Sun, S.}, \bibinfo{author}{Bi,
  J.}, \bibinfo{year}{2017}.
\newblock \bibinfo{title}{A survey on multi-view clustering}.
\newblock \bibinfo{journal}{arXiv preprint arXiv:1712.06246} .
\bibitem[{Xie et~al.(2018)Xie, Tao, Zhang, Liu, Zhang and Qu}]{xie2018on}
\bibinfo{author}{Xie, Y.}, \bibinfo{author}{Tao, D.}, \bibinfo{author}{Zhang,
  W.}, \bibinfo{author}{Liu, Y.}, \bibinfo{author}{Zhang, L.},
  \bibinfo{author}{Qu, Y.}, \bibinfo{year}{2018}.
\newblock \bibinfo{title}{On unifying multi-view self-representations for
  clustering by tensor multi-rank minimization}.
\newblock \bibinfo{journal}{International Journal of Computer Vision}
  \bibinfo{volume}{126}, \bibinfo{pages}{1157--1179}.
\bibitem[{Wang et~al.(2019)Wang, Yang, Liu and Fujita}]{wang2019studyKBS}
\bibinfo{author}{Wang, H.}, \bibinfo{author}{Yang, Y.}, \bibinfo{author}{Liu,
  B.}, \bibinfo{author}{Fujita, H.}, \bibinfo{year}{2019}.
\newblock \bibinfo{title}{A study of graph-based system for multi-view
  clustering}.
\newblock \bibinfo{journal}{Knowledge-Based Systems} \bibinfo{volume}{163},
  \bibinfo{pages}{1009--1019}.
\bibitem[{Zhang et~al.(2019)Zhang, Yang, Li and Fujita}]{KNOS_Recom_2}
\bibinfo{author}{Zhang, Y.}, \bibinfo{author}{Yang, Y.}, \bibinfo{author}{Li,
  T.}, \bibinfo{author}{Fujita, H.}, \bibinfo{year}{2019}.
\newblock \bibinfo{title}{A multitask multiview clustering algorithm in
  heterogeneous situations based on lle and le}.
\newblock \bibinfo{journal}{Knowledge-Based Systems} \bibinfo{volume}{163},
  \bibinfo{pages}{776--786}.
\bibitem[{Sim{\~o}es and de~Carvalho(2019)}]{KNOS_Recom_3}
\bibinfo{author}{Sim{\~o}es, E.C.}, \bibinfo{author}{de~Carvalho, F.d.A.},
  \bibinfo{year}{2019}.
\newblock \bibinfo{title}{A fuzzy clustering algorithm with multi-medoids for
  multi-view relational data}, in: \bibinfo{booktitle}{International Symposium
  on Neural Networks}, \bibinfo{organization}{Springer}. pp.
  \bibinfo{pages}{469--477}.
\bibitem[{Yang and Sinaga(2019)}]{KNOS_Recom_4}
\bibinfo{author}{Yang, M.S.}, \bibinfo{author}{Sinaga, K.P.},
  \bibinfo{year}{2019}.
\newblock \bibinfo{title}{A feature-reduction multi-view k-means clustering
  algorithm}.
\newblock \bibinfo{journal}{IEEE Access} \bibinfo{volume}{7},
  \bibinfo{pages}{114472--114486}.
\bibitem[{Kumar and
  Daum{\'e}(2011)}]{kumar2011Co-TrainingMultiviewSpectralClustering}
\bibinfo{author}{Kumar, A.}, \bibinfo{author}{Daum{\'e}, H.},
  \bibinfo{year}{2011}.
\newblock \bibinfo{title}{A co-training approach for multi-view spectral
  clustering}, in: \bibinfo{booktitle}{Proceedings of the 28th International
  Conference on Machine Learning (ICML-11)}, pp. \bibinfo{pages}{393--400}.
\bibitem[{Kumar et~al.(2011)Kumar, Rai and
  Daume}]{kumar2011Co-regMultiviewSpectralClustering}
\bibinfo{author}{Kumar, A.}, \bibinfo{author}{Rai, P.}, \bibinfo{author}{Daume,
  H.}, \bibinfo{year}{2011}.
\newblock \bibinfo{title}{Co-regularized multi-view spectral clustering}, in:
  \bibinfo{booktitle}{Advances in neural information processing systems}, pp.
  \bibinfo{pages}{1413--1421}.
\bibitem[{Gao et~al.(2015)Gao, Nie, Li and Huang}]{gao2015MVSC}
\bibinfo{author}{Gao, H.}, \bibinfo{author}{Nie, F.}, \bibinfo{author}{Li, X.},
  \bibinfo{author}{Huang, H.}, \bibinfo{year}{2015}.
\newblock \bibinfo{title}{Multi-view subspace clustering}, in:
  \bibinfo{booktitle}{Proceedings of the IEEE international conference on
  computer vision}, pp. \bibinfo{pages}{4238--4246}.
\bibitem[{Nie et~al.(2016)Nie, Li, Li et~al.}]{IJCAI16AMGLnie2016parameter}
\bibinfo{author}{Nie, F.}, \bibinfo{author}{Li, J.}, \bibinfo{author}{Li, X.},
  et~al., \bibinfo{year}{2016}.
\newblock \bibinfo{title}{Parameter-free auto-weighted multiple graph learning:
  A framework for multiview clustering and semi-supervised classification.},
  in: \bibinfo{booktitle}{IJCAI}, pp. \bibinfo{pages}{1881--1887}.
\bibitem[{Luo et~al.(2018)Luo, Zhang, Zhang and Cao}]{AAAI2018CSMSC}
\bibinfo{author}{Luo, S.}, \bibinfo{author}{Zhang, C.}, \bibinfo{author}{Zhang,
  W.}, \bibinfo{author}{Cao, X.}, \bibinfo{year}{2018}.
\newblock \bibinfo{title}{Consistent and specific multi-view subspace
  clustering}, in: \bibinfo{booktitle}{Thirty-Second AAAI Conference on
  Artificial Intelligence}, pp. \bibinfo{pages}{3730--3737}.
\bibitem[{Brbi{\'c} and Kopriva(2018)}]{PR2018MultiviewLRSSC}
\bibinfo{author}{Brbi{\'c}, M.}, \bibinfo{author}{Kopriva, I.},
  \bibinfo{year}{2018}.
\newblock \bibinfo{title}{Multi-view low-rank sparse subspace clustering}.
\newblock \bibinfo{journal}{Pattern Recognition} \bibinfo{volume}{73},
  \bibinfo{pages}{247--258}.
\bibitem[{Wang et~al.(2019)Wang, Lei, Guo, Zhang, Shi and Li}]{PR2019ISGMC}
\bibinfo{author}{Wang, X.}, \bibinfo{author}{Lei, Z.}, \bibinfo{author}{Guo,
  X.}, \bibinfo{author}{Zhang, C.}, \bibinfo{author}{Shi, H.},
  \bibinfo{author}{Li, S.Z.}, \bibinfo{year}{2019}.
\newblock \bibinfo{title}{Multi-view subspace clustering with intactness-aware
  similarity}.
\newblock \bibinfo{journal}{Pattern Recognition} \bibinfo{volume}{88},
  \bibinfo{pages}{50--63}.
\bibitem[{Cabral et~al.(2013)Cabral, De~la Torre, Costeira and
  Bernardino}]{cabral2013ICCV2013UnifyingBilinearFactorization}
\bibinfo{author}{Cabral, R.}, \bibinfo{author}{De~la Torre, F.},
  \bibinfo{author}{Costeira, J.P.}, \bibinfo{author}{Bernardino, A.},
  \bibinfo{year}{2013}.
\newblock \bibinfo{title}{Unifying nuclear norm and bilinear factorization
  approaches for low-rank matrix decomposition}, in:
  \bibinfo{booktitle}{Proceedings of the IEEE International Conference on
  Computer Vision}, pp. \bibinfo{pages}{2488--2495}.
\bibitem[{Lin et~al.(2011)Lin, Liu and Su}]{lin2011ALM}
\bibinfo{author}{Lin, Z.}, \bibinfo{author}{Liu, R.}, \bibinfo{author}{Su, Z.},
  \bibinfo{year}{2011}.
\newblock \bibinfo{title}{Linearized alternating direction method with adaptive
  penalty for low-rank representation}, in: \bibinfo{booktitle}{Advances in
  neural information processing systems}, pp. \bibinfo{pages}{612--620}.
\bibitem[{Yang and Wang(2018)}]{KNOS_Recom_1}
\bibinfo{author}{Yang, Y.}, \bibinfo{author}{Wang, H.}, \bibinfo{year}{2018}.
\newblock \bibinfo{title}{Multi-view clustering: a survey}.
\newblock \bibinfo{journal}{Big Data Mining and Analytics} \bibinfo{volume}{1},
  \bibinfo{pages}{83--107}.
\bibitem[{Bickel and Scheffer(2004)}]{bickel2004MultiviewClustering-ICDM2004}
\bibinfo{author}{Bickel, S.}, \bibinfo{author}{Scheffer, T.},
  \bibinfo{year}{2004}.
\newblock \bibinfo{title}{Multi-view clustering.}, in:
  \bibinfo{booktitle}{ICDM}, pp. \bibinfo{pages}{19--26}.
\bibitem[{Tzortzis and Likas(2009)}]{tzortzis2009CMM-MultiviewClustering}
\bibinfo{author}{Tzortzis, G.}, \bibinfo{author}{Likas, A.},
  \bibinfo{year}{2009}.
\newblock \bibinfo{title}{Convex mixture models for multi-view clustering}, in:
  \bibinfo{booktitle}{International Conference on Artificial Neural Networks},
  \bibinfo{organization}{Springer}. pp. \bibinfo{pages}{205--214}.
\bibitem[{Zhang et~al.(2017)Zhang, Hu, Fu, Zhu and Cao}]{zhang2017LMSC}
\bibinfo{author}{Zhang, C.}, \bibinfo{author}{Hu, Q.}, \bibinfo{author}{Fu,
  H.}, \bibinfo{author}{Zhu, P.}, \bibinfo{author}{Cao, X.},
  \bibinfo{year}{2017}.
\newblock \bibinfo{title}{Latent multi-view subspace clustering}, in:
  \bibinfo{booktitle}{Proceedings of the IEEE Conference on Computer Vision and
  Pattern Recognition}, pp. \bibinfo{pages}{4279--4287}.
\bibitem[{Xia et~al.(2014)Xia, Pan, Du and Yin}]{xia2014RMSC}
\bibinfo{author}{Xia, R.}, \bibinfo{author}{Pan, Y.}, \bibinfo{author}{Du, L.},
  \bibinfo{author}{Yin, J.}, \bibinfo{year}{2014}.
\newblock \bibinfo{title}{Robust multi-view spectral clustering via low-rank
  and sparse decomposition}, in: \bibinfo{booktitle}{Twenty-Eighth AAAI
  Conference on Artificial Intelligence}, pp. \bibinfo{pages}{2149--2155}.
\bibitem[{Zhu et~al.(2018)Zhu, Zhang, Hu, He, Lei and
  Zhu}]{zhu2018OneStepMultiviewSpectralClustering}
\bibinfo{author}{Zhu, X.}, \bibinfo{author}{Zhang, S.}, \bibinfo{author}{Hu,
  R.}, \bibinfo{author}{He, W.}, \bibinfo{author}{Lei, C.},
  \bibinfo{author}{Zhu, P.}, \bibinfo{year}{2018}.
\newblock \bibinfo{title}{One-step multi-view spectral clustering}.
\newblock \bibinfo{journal}{IEEE Transactions on Knowledge and Data
  Engineering} .
\bibitem[{Xie et~al.(2019)Xie, Gao, Wang and Xiao}]{KNOS_Recom_5}
\bibinfo{author}{Xie, D.}, \bibinfo{author}{Gao, Q.}, \bibinfo{author}{Wang,
  Q.}, \bibinfo{author}{Xiao, S.}, \bibinfo{year}{2019}.
\newblock \bibinfo{title}{Multi-view spectral clustering via integrating global
  and local graphs}.
\newblock \bibinfo{journal}{IEEE Access} \bibinfo{volume}{7},
  \bibinfo{pages}{31197--31206}.
\bibitem[{Wang et~al.(2019)Wang, Yang and Liu}]{GMCTKDE2019}
\bibinfo{author}{Wang, H.}, \bibinfo{author}{Yang, Y.}, \bibinfo{author}{Liu,
  B.}, \bibinfo{year}{2019}.
\newblock \bibinfo{title}{Gmc: Graph-based multi-view clustering}.
\newblock \bibinfo{journal}{IEEE Transactions on Knowledge and Data
  Engineering} , \bibinfo{pages}{1--1}.
\bibitem[{Zhang et~al.(2015)Zhang, Fu, Liu, Liu and Cao}]{zhang2015LT-MSC}
\bibinfo{author}{Zhang, C.}, \bibinfo{author}{Fu, H.}, \bibinfo{author}{Liu,
  S.}, \bibinfo{author}{Liu, G.}, \bibinfo{author}{Cao, X.},
  \bibinfo{year}{2015}.
\newblock \bibinfo{title}{Low-rank tensor constrained multiview subspace
  clustering}, in: \bibinfo{booktitle}{Proceedings of the IEEE international
  conference on computer vision}, pp. \bibinfo{pages}{1582--1590}.
\bibitem[{Cao et~al.(2015)Cao, Zhang, Fu, Liu and Zhang}]{cao2015DiMSC}
\bibinfo{author}{Cao, X.}, \bibinfo{author}{Zhang, C.}, \bibinfo{author}{Fu,
  H.}, \bibinfo{author}{Liu, S.}, \bibinfo{author}{Zhang, H.},
  \bibinfo{year}{2015}.
\newblock \bibinfo{title}{Diversity-induced multi-view subspace clustering},
  in: \bibinfo{booktitle}{Proceedings of the IEEE conference on computer vision
  and pattern recognition}, pp. \bibinfo{pages}{586--594}.
\bibitem[{Zheng et~al.(2019)Zheng, Zhu, Li, Pang, Wang and Li}]{FCMSC}
\bibinfo{author}{Zheng, Q.}, \bibinfo{author}{Zhu, J.}, \bibinfo{author}{Li,
  Z.}, \bibinfo{author}{Pang, S.}, \bibinfo{author}{Wang, J.},
  \bibinfo{author}{Li, Y.}, \bibinfo{year}{2019}.
\newblock \bibinfo{title}{Feature concatenation multi-view subspace
  clustering}.
\newblock \bibinfo{journal}{Neurocomputing} .
\bibitem[{Zhou et~al.(2019)Zhou, Zhang, Peng, Bhaskar and Yang}]{zhou2019dual}
\bibinfo{author}{Zhou, T.}, \bibinfo{author}{Zhang, C.}, \bibinfo{author}{Peng,
  X.}, \bibinfo{author}{Bhaskar, H.}, \bibinfo{author}{Yang, J.},
  \bibinfo{year}{2019}.
\newblock \bibinfo{title}{Dual shared-specific multiview subspace clustering}.
\newblock \bibinfo{journal}{IEEE transactions on cybernetics} .
\bibitem[{Zheng et~al.(2012)Zheng, Liu, Sugimoto, Yan and
  Okutomi}]{zheng2012PracticalLowRankMatrixApproximation-L1norm}
\bibinfo{author}{Zheng, Y.}, \bibinfo{author}{Liu, G.},
  \bibinfo{author}{Sugimoto, S.}, \bibinfo{author}{Yan, S.},
  \bibinfo{author}{Okutomi, M.}, \bibinfo{year}{2012}.
\newblock \bibinfo{title}{Practical low-rank matrix approximation under robust
  l 1-norm}, in: \bibinfo{booktitle}{2012 IEEE Conference on Computer Vision
  and Pattern Recognition}, \bibinfo{organization}{IEEE}. pp.
  \bibinfo{pages}{1410--1417}.
\bibitem[{Liu et~al.(2013)Liu, Wang, Gao and
  Han}]{liu2013MultiviewClustering-JointNonnegativeMatrixFactorization}
\bibinfo{author}{Liu, J.}, \bibinfo{author}{Wang, C.}, \bibinfo{author}{Gao,
  J.}, \bibinfo{author}{Han, J.}, \bibinfo{year}{2013}.
\newblock \bibinfo{title}{Multi-view clustering via joint nonnegative matrix
  factorization}, in: \bibinfo{booktitle}{Proceedings of the 2013 SIAM
  International Conference on Data Mining}, \bibinfo{organization}{SIAM}. pp.
  \bibinfo{pages}{252--260}.
\bibitem[{Zhao et~al.(2017)Zhao, Ding and
  Fu}]{zhao2017MultiviewClustering-DeepMatrixFactorization}
\bibinfo{author}{Zhao, H.}, \bibinfo{author}{Ding, Z.}, \bibinfo{author}{Fu,
  Y.}, \bibinfo{year}{2017}.
\newblock \bibinfo{title}{Multi-view clustering via deep matrix factorization},
  in: \bibinfo{booktitle}{Proceedings of the Thirty-First AAAI Conference on
  Artificial Intelligence}, \bibinfo{organization}{AAAI Press}. pp.
  \bibinfo{pages}{2921--2927}.
\bibitem[{Akata et~al.(2011)Akata, Thurau and
  Bauckhage}]{akata2011NMFinMultimodalityData}
\bibinfo{author}{Akata, Z.}, \bibinfo{author}{Thurau, C.},
  \bibinfo{author}{Bauckhage, C.}, \bibinfo{year}{2011}.
\newblock \bibinfo{title}{Non-negative matrix factorization in multimodality
  data for segmentation and label prediction}, in: \bibinfo{booktitle}{16th
  Computer Vision Winter Workshop}, \bibinfo{organization}{Citeseer}.
  p.~\bibinfo{pages}{27}.
\bibitem[{Lee and Seung(1999)}]{lee1999NMF-Nature}
\bibinfo{author}{Lee, D.D.}, \bibinfo{author}{Seung, H.S.},
  \bibinfo{year}{1999}.
\newblock \bibinfo{title}{Learning the parts of objects by non-negative matrix
  factorization}.
\newblock \bibinfo{journal}{Nature} \bibinfo{volume}{401},
  \bibinfo{pages}{788}.
\bibitem[{Fazel et~al.(2001)Fazel, Hindi, Boyd et~al.}]{fazel2001a}
\bibinfo{author}{Fazel, M.}, \bibinfo{author}{Hindi, H.},
  \bibinfo{author}{Boyd, S.P.}, et~al., \bibinfo{year}{2001}.
\newblock \bibinfo{title}{A rank minimization heuristic with application to
  minimum order system approximation}, in: \bibinfo{booktitle}{Proceedings of
  the American control conference}, \bibinfo{organization}{Citeseer}. pp.
  \bibinfo{pages}{4734--4739}.
\bibitem[{Schönemann(1966)}]{Peter1966OPPproposed}
\bibinfo{author}{Schönemann, P.H.}, \bibinfo{year}{1966}.
\newblock \bibinfo{title}{A generalized solution of the orthogonal procrustes
  problem}.
\newblock \bibinfo{journal}{Psychometrika} \bibinfo{volume}{31},
  \bibinfo{pages}{1--10}.
\bibitem[{{Zhang}(2000)}]{OPPsolution}
\bibinfo{author}{{Zhang}, Z.}, \bibinfo{year}{2000}.
\newblock \bibinfo{title}{A flexible new technique for camera calibration}.
\newblock \bibinfo{journal}{IEEE Transactions on Pattern Analysis and Machine
  Intelligence} \bibinfo{volume}{22}, \bibinfo{pages}{1330--1334}.
\newblock \DOIprefix\doi{10.1109/34.888718}.
\bibitem[{Yin et~al.(2015)Yin, Gao, Lin, Shi and Guo}]{yin2015dual}
\bibinfo{author}{Yin, M.}, \bibinfo{author}{Gao, J.}, \bibinfo{author}{Lin,
  Z.}, \bibinfo{author}{Shi, Q.}, \bibinfo{author}{Guo, Y.},
  \bibinfo{year}{2015}.
\newblock \bibinfo{title}{Dual graph regularized latent low-rank representation
  for subspace clustering}.
\newblock \bibinfo{journal}{IEEE Transactions on Image Processing}
  \bibinfo{volume}{24}, \bibinfo{pages}{4918--4933}.
\bibitem[{Zhang et~al.(2018)Zhang, Xu, Shao and Yang}]{zhang2018discriminative}
\bibinfo{author}{Zhang, Z.}, \bibinfo{author}{Xu, Y.}, \bibinfo{author}{Shao,
  L.}, \bibinfo{author}{Yang, J.}, \bibinfo{year}{2018}.
\newblock \bibinfo{title}{Discriminative block-diagonal representation learning
  for image recognition}.
\newblock \bibinfo{journal}{IEEE Transactions on Neural Networks}
  \bibinfo{volume}{29}, \bibinfo{pages}{3111--3125}.
\bibitem[{Zhang et~al.(2017)Zhang, Lai, Xu, Shao, Wu and
  Xie}]{zhang2017discriminative}
\bibinfo{author}{Zhang, Z.}, \bibinfo{author}{Lai, Z.}, \bibinfo{author}{Xu,
  Y.}, \bibinfo{author}{Shao, L.}, \bibinfo{author}{Wu, J.},
  \bibinfo{author}{Xie, G.}, \bibinfo{year}{2017}.
\newblock \bibinfo{title}{Discriminative elastic-net regularized linear
  regression}.
\newblock \bibinfo{journal}{IEEE Transactions on Image Processing}
  \bibinfo{volume}{26}, \bibinfo{pages}{1466--1481}.
\bibitem[{Asuncion and Newman(2007)}]{asuncion2007uci}
\bibinfo{author}{Asuncion, A.}, \bibinfo{author}{Newman, D.},
  \bibinfo{year}{2007}.
\newblock \bibinfo{title}{Uci machine learning repository}.
\bibitem[{Fei-Fei et~al.(2004)Fei-Fei, Fergus and Perona}]{Caltech101}
\bibinfo{author}{Fei-Fei, L.}, \bibinfo{author}{Fergus, R.},
  \bibinfo{author}{Perona, P.}, \bibinfo{year}{2004}.
\newblock \bibinfo{title}{Learning generative visual models from few training
  examples: An incremental bayesian approach tested on 101 object categories},
  in: \bibinfo{booktitle}{2004 conference on computer vision and pattern
  recognition workshop}, \bibinfo{organization}{IEEE}. pp.
  \bibinfo{pages}{178--178}.
\bibitem[{Ng et~al.(2002)Ng, Jordan and Weiss}]{ng2002OnSpectralClustering}
\bibinfo{author}{Ng, A.Y.}, \bibinfo{author}{Jordan, M.I.},
  \bibinfo{author}{Weiss, Y.}, \bibinfo{year}{2002}.
\newblock \bibinfo{title}{On spectral clustering: Analysis and an algorithm},
  in: \bibinfo{booktitle}{Advances in neural information processing systems},
  pp. \bibinfo{pages}{849--856}.
\bibitem[{Cortes et~al.(2009)Cortes, Mohri and
  Rostamizadeh}]{cortes2009learning}
\bibinfo{author}{Cortes, C.}, \bibinfo{author}{Mohri, M.},
  \bibinfo{author}{Rostamizadeh, A.}, \bibinfo{year}{2009}.
\newblock \bibinfo{title}{Learning non-linear combinations of kernels}, in:
  \bibinfo{booktitle}{Advances in neural information processing systems}, pp.
  \bibinfo{pages}{396--404}.
\bibitem[{Brbi{\'c} and Kopriva(2018)}]{brbic2018MLRSSC}
\bibinfo{author}{Brbi{\'c}, M.}, \bibinfo{author}{Kopriva, I.},
  \bibinfo{year}{2018}.
\newblock \bibinfo{title}{Multi-view low-rank sparse subspace clustering}.
\newblock \bibinfo{journal}{Pattern Recognition} \bibinfo{volume}{73},
  \bibinfo{pages}{247--258}.
\bibitem[{Manning et~al.(2010)Manning, Raghavan and
  Sch{\"u}tze}]{manning2010introduction}
\bibinfo{author}{Manning, C.}, \bibinfo{author}{Raghavan, P.},
  \bibinfo{author}{Sch{\"u}tze, H.}, \bibinfo{year}{2010}.
\newblock \bibinfo{title}{Introduction to information retrieval}.
\newblock \bibinfo{journal}{Natural Language Engineering} \bibinfo{volume}{16},
  \bibinfo{pages}{100--103}.

\end{thebibliography}
	
\end{document}